\definecolor{lightgray}{gray}{0.5}
\pgfplotsset{width=5.35cm,compat=1.9}
\DeclareSymbolFont{extraup}{U}{zavm}{m}{n}
\DeclareMathSymbol{\vardiamond}{\mathalpha}{extraup}{87}
\newtheorem{theorem}{Theorem}
\newtheorem{proposition}{Proposition}
\newtheorem{lemma}{Lemma}
\newtheorem{corollary}{Corollary}
\newtheorem{definition}{Definition}
\newtheorem{remark}{Remark}
\long\def\comment#1{}
\newcommand{\poly}{\mathrm{poly}}
\newcommand{\E}{\ensuremath{{\mathbb{E}}}}
\DeclareMathOperator*{\argmax}{argmax}
\newcommand{\real}{\ensuremath{\mathbb{R}}}
\definecolor{lightgray}{gray}{0.5}
\title{\textbf{Adaptive Clustering and Personalization in  Multi-Agent Stochastic Linear Bandits}}
\author{Avishek Ghosh$^{\star, \dagger}$, \, Abishek Sankararaman$^{\star, \ddagger}$ and  Kannan Ramchandran$^\dagger$ \vspace{2mm} \\
Halıcıoğlu Data Science Institute (HDSI), UC San Diego$^\dagger$ \\
\vspace{1.5mm}
AWS AI, Palo Alto, USA$^\ddagger$  \\
email: a2ghosh$@$ucsd.edu, abisanka$@$amazon.com, \\ kannanr$@$eecs.berkeley.edu 
}
\begin{document}
\maketitle

\begin{abstract}
    It has been empirically observed in several recommendation systems, that their performance improve as more people join the system by learning \emph{across heterogeneous users}. In this paper, we seek to theoretically understand this phenomenon by studying the problem of minimizing regret in an $N$ users heterogeneous stochastic linear bandits framework. We study this problem under two models of heterogeneity; {\em (i)} a clustering framework where users are partitioned into groups with users in the same group being identical, but different across groups, and {\em (ii)} a personalization framework where no two users are necessarily identical, but are all similar. In the clustered users' setup,  we propose a successive refinement algorithm, which for any agent, achieves regret scaling as $\mathcal{O}(\sqrt{T/N})$, if the agent is in a `well separated' cluster, or scales as  $\mathcal{O}(T^{\frac{1}{2} + \varepsilon}/(N)^{\frac{1}{2} -\varepsilon})$ if its cluster is not well separated, where $\varepsilon$ is positive and arbitrarily close to $0$. In the personalization framework, we introduce a natural algorithm where, the personal bandit instances are initialized with the estimates of the global average model and show that, any agent $i$ whose parameter deviates from the population average by $\epsilon_i$, attains a regret scaling of $\widetilde{O}(\epsilon_i\sqrt{T})$. Our algorithms enjoy several attractive features of being \emph{problem complexity adaptive and parameter free} \textemdash if there is structure such as well separated clusters, or all users are similar to each other, then the regret of every agent goes down with $N$ (collaborative gain). On the other hand, in the worst case, the regret of any user is no worse than that of having individual algorithms per user that does not leverage collaborations. 
\end{abstract}

\section{INTRODUCTION}
\label{sec:intro}

Large scale web recommendation systems have become ubiquitous in the modern day, due to a myriad of applications that use them including online shopping services, video streaming services, news and article recommendations, restaurant recommendations etc, each of which are used by thousands, if not more users, across the world. For each user, these systems make repeated decisions under uncertainty, in order to better learn the preference of each individual user and serve them. A unique feature these large platforms have is that of \emph{collaborative learning} \textemdash namely applying the learning from one user to improve the performance on another \cite{rec_sys_collab_learning}. However, the sequential online setting renders this complex, as two users are seldom identical \citep{pintrest}. 

We study the problem of multi-user contextual bandits \citep{chatterji2020osom}, and quantify the gains obtained by collaborative learning under user heterogeneity. We propose two models of user-heterogeneity: (a) clustering framework where only users in the same group are identical (b) personalization framework where no two users are necessarily identical, but are close to the population average. 
Both these models are widely used in practical systems involving a large number of users (ex. \citep{pintrest, amazon, cluster_rec_2r, cluster_rec}). 
User clustering in such systems can be induced from a variety of factors such as affinity to similar interests, age-groups etc  \citep{embedding_1, word_embeddings_1, cold_start_embedding}. The personalization framework in these systems is also a natural in many neural network models, wherein users represented by learnt embedding vectors are not identical; nevertheless similar users are embedded nearby \citep{deep_matrix_factor_1, deep_matrix_factor_2, deep_embedding, embedding_1}.



Formally, our model consists of $N$ users, all part of a common platform. The interaction between the agents and platform proceeds in a sequence of rounds. Each round begins with the platform receiving $K$ contexts corresponding to $K$ items from the environment. The platform then recommends an item to each user and receives feedback from the users about the item. We posit that associated with each user $i$, is an preference vector $\theta^*_i$, initially unknown to the platform. In any round, the average reward (the feedback) received by agent $i$ for a recommendation of item, is the inner product of $\theta^*_i$ with the context vector of the recommended item. The goal of the platform is to maximize the reward collected over a time-horizon of $T$ rounds. Following standard terminology, we henceforth refer to an ``arm" and item interchangeably, and thus ``recommending item $k$" is synonymous to ``playing arm $k$". We also use agents and users interchangeably.

\paragraph{Example Application:} Our setting is motivated through a caricature of a news recommendation system serving $N$ users and $K$ publishers \citep{langford_news}. Each day, each of the $K$ publishers, publishes a news article, which corresponds to the context vector in our contextual bandit framework. In practice, one can use standard tools to embed articles in vector spaces, where the dimensions correspond to topics such as politics, religion, sports etc (\citep{doc_embedding}). The user preference indicates the interest of a user, and the reward, being computed as an inner product of the context vector and the user preference, models the observation that the more aligned an article is to a user's interest, the higher the reward.

For both  frameworks, we propose \emph{adaptive} algorithms; in the clustering setup, we propose Successive Clustering of Linear Bandits (SCLB), which is agnostic to the number of clusters, the gap between clusters and the cluster size. Yet SCLB yields regret that depends on these parameters, and is thus adaptive. In the personalization framework, our proposed algorithm, namely Personalized Multi-agent Linear Bandits (PMLB) adapts to the level of common representation across users. In particular, if an agents' preference vector is close to the population average, PMLB exploits that and incurs low regret for this agent due to collaboration. On the other hand if an agent's preference vector is far from the population average, PMLB yields a regret similar to that of OFUL \citep{chatterji2020osom} or Linear Bandit algorithms \citep{oful} that do not benefit from multi-agent collaboration. 

\vspace{-2mm}
\section{MAIN CONTRIBUTIONS} 
Our contributions are \textemdash {\em (i)} algorithmic and {\em (ii)} theoretical.
\vspace{-2mm}
\subsection{Algorithmic: Adaptive and Parameter-Free}

Our key novelty with respect to the algorithm is to propose adaptive and parameter free algorithms. Roughly speaking, an algorithm is parameter-free and adaptive, if does not need input about the difficulty of the problem, yet has regret guarantees that scale with the inherent complexity. In particular, we show in the two frameworks that, if there is structure, then the regret attained by our algorithms is much lower as they learn across users. Simultaneously, in the worst case, the regret guarantee is no worse than if every agent had its own algorithm without collaborations. 


\textbf{In the clustering framework}, we give a multi-phase, successive refinement based algorithm, SCLB,  which is parameter free---specifically no knowledge of cluster separation and number of clusters is needed. SCLB \emph{automatically} identifies whether a given problem instance is `hard' or `easy' and adapts to the corresponding regret. Concretely, SCLB attains per-agent regret $\mathcal{O}(\sqrt{T/N})$, if the agent is in a `well separated' (i.e. `easy') cluster, or   $\mathcal{O}(T^{\frac{1}{2} + \varepsilon}/(N)^{\frac{1}{2} -\varepsilon})$ if the agent's  cluster is not well separated (i.e., `hard'), where $\varepsilon$ is positive and arbitrarily close to $0$. \emph{This result holds true, even in the limit when the cluster separation approaches $0$}.  This shows that when the underlying instance gets harder to cluster, the regret is increased. Nevertheless, despite the clustering being hard to accomplish, every user still experiences collaborative gain of $N^{1/2 - \varepsilon}$ and regret sub-linear in $T$. Moreover, if clustering is easy i.e., well-separated, then the regret rate \emph{matches that of an oracle that knows the cluster identities}.

\textbf{In the personalization framework}, we give PMLB, a parameter free algorithm, whose regret adapts to an appropriately defined problem complexity -- if the users are similar, then the regret is low due to collaborative learning while, in the worst case, the regret is no worse than that of individual learning. Formally, we define the complexity as the \emph{factor of common representation}, which for agent $i$ is $\epsilon_i := \|\theta^*_i - \frac{1}{N}\sum_{l=1}^N \theta^*_l\|$, where $\theta^*_i \in \real^d$ is agent $i$'s  representation, and  $\frac{1}{N}\sum_{l=1}^N \theta^*_i$ is the average representation of $N$ agents. PMLB adapts to $\epsilon_i$ gracefully (without knowing it apriori) and yields a regret of $\mathcal{O}(\epsilon_i \sqrt{dT})$. Hence, if the agents share representations, i.e., $\epsilon_i$ is small, then PMLB obtains low regret. On the other hand, if $\epsilon_i$ is large, say $\mathcal{O}(1)$, the agents do not share a common representation, the regret of PMLB is $\mathcal{O}(\sqrt{dT})$, which matches that obtained by each agent playing OFUL, independently of other agents. Thus, PMLB benefits from collaborative learning and obtains small regret, if the problem structure admits, else the regret matches the baseline strategy of every agent running an independent bandit instance.

\textbf{Empirical Validation:} 
We empirically verify the theoretical insights on both synthetic and Last.FM real data.  
In the clustering framework, we compare with three benchmarks \textemdash CLUB \citep{clustering_online}, SCLUB \citep{set_club}, and a simple baseline where every agent runs an independent bandit model, i.e.,  no collaboration. We observe that our algorithms have superior performance compared to the benchmarks in a variety of settings. We observe similar performance in the personalization framework also. 

\subsection{Theoretical: Improved bounds for Clustering}
It is worth pointing out that SCLB works for \emph{all} ranges of separation, which is starkly different from standard algorithms in bandit clustering (\citep{clustering_online,gentile2017context,korda2016distributed}) and statistics (\citep{em_2,em_1}). We now compare our results to CLUB \citep{clustering_online}, that can be modified to be applicable to our setting (c.f. Section \ref{sec:simulations}) (note that we make \emph{identical assumptions} to that of CLUB). First, CLUB is non-adaptive and its regret guarantees hold only when the clusters are separated. Second, even in the separated setting, the separation (gap) cannot be lower than $\mathcal{O}(1/T^{1/4})$ for CLUB, while it can be as low as $\mathcal{O}(1/T^\alpha)$, where $\alpha<1/2$ for SCLB. Moreover, in simulations (Section~\ref{sec:simulations}) we observe that SCLB outperforms CLUB in a variety of synthetic and a real data setting.

The key innovations we introduce in the analysis are that of \emph{`perturbed OFUL'} and the \emph{`shifted OFUL'} algorithms in the clustering and personalization setup respectively. In the clustering setup, our algorithm first runs individual OFUL instances per agent, estimates the parameter, then clusters the agents and treats all agents of a single cluster as one entity. In order to prove that this works even when the cluster separation is small, we need to analyze the behaviour of OFUL where the rewards come from a slightly perturbed model. In the personalization setup, our algorithm first estimates the mean vector $\Bar{\theta^*} := \frac{1}{N}\sum_{i=1}^N\theta^{*}_i$ of the population. Subsequently, the algorithm subtracts the effect of the mean and only learns the component $\theta^{*}_i - \Bar{\theta^{*}}$ by compensating the rewards. Our technical innovation is to show that with high probability, shifting the rewards by any fixed vector can only increase overall regret (Lemma \ref{lem:shift-oful}).

\section{RELATED WORK}
Collaborative gains in multi-user recommendation systems have long been studied in Information retrieval and recommendation systems (ex. \citep{cluster_rec, cluster_rec_2r, amazon,rec_sys_collab_learning}). The focus has been in developing effective ideas to help practitioners deploy large scale systems. Empirical studies of recommendation system has seen renewed interest lately due to the integration of deep learning techniques with classical ideas (ex. \citep{murali, zhao2019recommending, deep_learn_rec, covington2016deep, deep_embedding, rec_deep}). Motivated by the empirical success, we undertake a theoretical approach to quantify collaborative gains achievable in a contextual bandit setting.
Contextual bandits has proven to be fruitful in modeling sequential decision making in many applications \citep{langford_news, gang_bandits, clustering_online}.  

The paper of \citep{clustering_online} is closest to our clustering setup, where in each round, the platform plays an arm for a single randomly chosen user. As outlined before, our algorithm obtains a superior performance, both in theory and empirically. For personalization, the recent paper of \citep{yang2021impact} is the closest, which posits all users's parameters to be in a common low dimensional subspace. \citep{yang2021impact} proposes a learning algorithm under this assumption. In contrast, we make no parametric assumptions, and demonstrate an algorithm that achieves collaboration gain, if there is structure, while degrading gracefully to the simple baseline of independent bandit algorithms in the absence of structure.

The framework of personalized learning has been exploited in a great detail in representation learning and meta-learning. While \citep{d2019sharing,lazaric,rusu2015policy,higgins2017darla,parisotto2015actor} learn common representation across agents in Reinforcement Learning, \citep{arora2020provable} uses it for imitation learning. 
We remark that representation learning is also closely connected to meta-learning \citep{meta,finn2019online,khodak2019adaptive}, where close but a common initialization is learnt from leveraging non identical but similar representations. Furthermore, in Federated learning, the problem of personalization is a well studied problem \citep{three-google,fallah2020personalized,fallah2020convergence}.



\section{PROBLEM SETUP}
\label{sec:setup}

\textbf{Users and Arms}: Our system consists of $N$ users, interacting with a centralized system (termed as `center' henceforth) repeatedly over $T$ rounds. At the beginning of each round, environment provides the center with $K$ context vectors corresponding to $K$ arms, and for each user, the center recommends one of the $K$ arms to play. At the end of the round, every user receives a reward for the arm played, which is observed by the center. The $K$ context vectors in round $t$ are denoted by $\beta_t = [\beta_{1,t},\ldots,\beta_{K,t}]\in \real^{d \times K}$.

\textbf{User heterogeneity:} Each user $i$, is associated with a preference vector $\theta^*_i \in \mathbb{R}^d$, and the reward user $i$ obtains from playing arm $j$ at time $t$ is 
is given by $\langle \beta_{j,t}, \theta^* \rangle + \xi_t$. Thus, the structure of the set of user representations $(\theta^*_i)_{i=1}^N$ govern how much benefit from collaboration can be expected. In the rest of the paper, we consider two instantiations of the setup - a clustering framework and the personalization framework.

\textbf{Stochastic Assumptions}: We follow the framework of \citep{oful, chatterji2020osom} and assume that $(\xi_t)_{t \geq 1}$ and $(\beta_t)_{t \geq 1}$ are random variables. We denote by $\mathcal{F}_{t-1}$, as the sigma algebra generated by all noise random variables upto and including time $t-1$.
 We denote by $\mathbb{E}_{t-1}(.)$ and $\mathbb{V}_{t-1}(.)$ as the conditional expectation and conditional variance operators respectively with respect to $\mathcal{F}_{t-1}$. 
We assume that the $(\xi_t)_{t \geq 1}$ are conditionally sub-Gaussian noise with known parameter $\sigma$, conditioned on all the arm choices and realized rewards in the system upto and including time $t-1$. Without loss of generality, we assume $\sigma=1$ throughout. The contexts $\beta_{i,t} \in \mathbb{B}^d_2(1)$ are assumed to be drawn independent of both the past
and $\{\beta_{j,t}\}_{j \neq i}$, satsifying
\begin{align}
\label{eqn:context}
    \mathbb{E}_{t-1}[\beta_{i,t}] = 0 \qquad \mathbb{E}_{t-1} [\beta_{i,t} \, \beta_{i,t}^\top] \succeq \rho_{\min} I.
\end{align}
Moreover, for any fixed $z \in \mathbb{R}^d$, of unity norm, the random variable $(z^\top \beta_{i,t})^2$ is conditionally sub-Gaussian, for all $i$, with $\mathbb{V}_{t-1}[(z^\top \beta_{i,t})^2)] \leq 4 \rho_{\min}$.
This means that the conditional mean of the covariance matrix is zero and the conditional covariance matrix is positive definite with minimum eigenvalue at least $\rho_{\min}$. 

Furthermore, the conditional variance assumption is crucially required to apply \eqref{eqn:context} for contexts of (random) bandit arms selected by our learning algorithm (see \citep[Lemma 1]{clustering_online}). Note this this set of assumptions is not new and the exact set of assumptions were used in \citep{clustering_online,chatterji2020osom}\footnote{The conditional variance assumption is implicitly used in \citep{chatterji2020osom} without explicit statement.} for online clustering and binary model selection respectively. Furthermore, \citep{foster2019model} uses similar assumptions for stochastic linear bandits and \citep{ghosh2021model} uses it for model selection in Reinforcement learning problems with function approximation. Also, since the context vectors are drawn from unit sphere (and hence sub-Gaussian), we have $\rho_{\min} \leq 1/d$, and hence one needs to track the dependence on $\rho_{\min}$. Observe that our stochastic assumption also includes the simple setting where the contexts evolve according to a random process independent of the actions and rewards from the learning algorithm.

\textbf{Performance Metric:} At time $t$, we denote by $B_{i,t} \in [K]$ to be the arm played by any agent $i$ with preference vector $\theta^*_i$. The corresponding regret, over a time horizon of $T$ is given by
\begin{align}
\label{eqn:regret_def}
   R_i(T) = \sum_{t=1}^T \mathbb{E}  \max_{j \in [K]} \langle \theta^*_i, \beta_{j,t} - \beta_{B_{i,t},t} \rangle  
\end{align}

Throughout, OFUL refers to the linear bandit algorithm of \citep{oful}, which we use as a blackbox. In particular we use a variant of the OFUL as prescribed in \citep{chatterji2020osom}\footnote{We use OFUL as used in the OSOM algorithm of  \citep{chatterji2020osom} without bias for the linear contextual setting. }.

\section{CLUSTERING FRAMEWORK}
\label{sec:clustering}
\begin{algorithm}[t!]
  \caption{Successive Clustering of Linear Bandits (SCLB) }
  \begin{algorithmic}[1]
 \STATE  \textbf{Input:} No. of users $N$, horizon $T$, parameter $\alpha <1/2$, constant $C$, high probability bound $\delta$
 \FOR {phases $1 \leq i \leq \log_2(T)$}
 \STATE Play CMLB ($\gamma = 3/(N2^i)^{\alpha}$, horizon $T = 2^i$, high probability $\delta/2^i$, cluster-size $p^{*} = i^{-2}$)
 \ENDFOR
   \end{algorithmic}
  \label{algo:size_unknown}
\end{algorithm}
We assume that the users' vectors $(\theta^*_i)_{i=1}^N$ are clustered into $L$ groups, with $p_i \in (0,1]$ denoting the fraction of users in cluster $i$. All users in the same cluster have the same context vector, and thus without loss of generality, for all clusters $i \in [L]$, we denote by $\theta^*_i$, to be the preference vector of any user of cluster $i$. We define \emph{separation parameter}, or SNR (signal to noise ratio) of cluster $i$ as $\Delta_i := \min_{j \in [L]\setminus\{ i\}} \|\theta^*_i - \theta^*_j \|$, smallest distance to another cluster.


\paragraph{Learning Algorithm:} We propose the Successive Clustering of Linear Bandits (SCLB) algorithm in Algorithm \ref{algo:size_unknown}. SCLB does not need any knowledge of the gap $\{\Delta_i\}_{i=1}^L$, the number of clusters $L$ or the cluster size fractions $\{p_i\}_{i=1}^L$. Nevertheless, SCLB adapts to the problem SNR and yields regret accordingly. One attractive feature of Algorithm \ref{algo:size_unknown} is that it works uniformly \emph{for all} ranges of the gap $\{\Delta_i\}_{i=1}^L$. This is in sharp contrast with the existing algorithms \citep{clustering_online} which is only guaranteed to give good performance when the gap $\{\Delta_i\}_{i=1}^L$ are large enough. Furthermore, our uniform guarantees are in contrast with the works in standard clustering algorithms, where theoretical guarantees are only given for a sufficiently large separation \citep{em_1,em_2}.

\begin{algorithm}[t!]
\caption{Clustered Multi-Agent Bandits (CMLB)}
  \begin{algorithmic}[1]
    \STATE \textbf{Input:} No. of users $N$, horizon $T$, parameter $\alpha <1/2$, constant $C$, high probability bound $\delta$, threshold $\gamma$,  cluster-size parameter $p^{*}$ \\
    \vspace{1mm}
  {\textbf{ Individual Learning Phase}}    
 \STATE   $T_{\text{Explore}} \gets C^{(2)} d (NT)^{2\alpha} \log(1/\delta)$ 
\STATE All agents play OFUL($\delta)$ independently for $T_{\text{explore}}$ rounds
    \STATE $\{\hat{\theta}^{(i)}\}_{i=1}^N \gets $ All agents' estimates at the end of round $T_{\text{explore}}$.\\
{\textbf{ Cluster the Users}}    
\STATE {\ttfamily User-Clusters} $\gets $ {\ttfamily MAXIMAL-CLUSTER$(\{\hat{\theta}^{(i)}\}_{i=1}^N, \gamma$, $p^{*})$} \\
\vspace{1mm} {\textbf{ Collaborative Learning Phase}}    
  \STATE Initialize one OFUL($\delta$) instance per-cluster
 \FOR {clusters $\ell \in \{1,\ldots,| \text{{\ttfamily User-Clusters}} |\}$ \textbf{in parallel} }
 \FOR {times $t \in \{T_{\text{explore}}+1,\cdots,T\}$}
 \STATE All users in the $\ell$-th cluster play the arm given by the OFUL algorithm of cluster $l$.
 \STATE Average of the observed rewards of all users of cluster $l$ is used to update the OFUL($\delta$) state of cluster $l$
 \ENDFOR
 \ENDFOR
  \end{algorithmic}
  \label{alg:CMLB}
\end{algorithm}

SCLB is a multi-phase algorithm, which invokes Clustered Multi-agent Linear Bandits (CMLB) (Algorithm~\ref{alg:CMLB}) repeatedly, by decreasing the size parameter, namely $p^{*}$ polynomially and high probability parameter $\delta_i$ exponentially. Algorithm \ref{algo:size_unknown} proceeds in phases of exponentially growing phase length with phase $j \in \mathbb{N}$ lasting for $2^j$ rounds. In each phase, a fresh instance of CMLB is instantiated with high probability parameter $\delta/2^j$ and the minimum size parameter $j^{-2}$. Thus, as the phase length grows, the size parameter sent as input to Algorithm \ref{alg:CMLB} decays. We show that this simple strategy suffices to show that the size parameter converges to $p_i$, and we obtain collaborative gains without knowledge of $p_i$. 

\textbf{CMLB (Algorithm~\ref{alg:CMLB}) :} CMLB works in the three phases: (a) (Individual Learning) the $N$ users play an independent linear bandit algorithm to (roughly) learn their preference; (b) (Clustering)  users are clustered based on their estimates using \texttt{MAXIMAL CLUSTER} (Algorithm~\ref{algo:clustering}); and (c) (Collaborative Learning) one Linear Bandit instance per cluster is initialized and all users of a cluster play the same arm. The average reward over all users in the cluster is used to update the per-cluster bandit instance. When clustered correctly, the learning is faster, as the noise variance is reduced due to averaging across users.
Note that \texttt{MAXIMAL CLUSTER} algorithm requires a size parameter $p^*$. 
\subsection{Regret guarantee of SCLB}
As mentioned earlier, SCLB is an adaptive algorithm that yields provable regret for \emph{all ranges} of $\{\Delta_i\}_{i=1}^L$. When $\{\Delta_i\}_{i=1}^L$ are large, SCLB can cluster the agents perfectly, and thereafter exploit the collaborative gains across users in same cluster. On the other hand, if $\{\Delta_i\}_{i=1}^L$ are small, SCLB still adapts to the gap, and yields a non-trivial (but sub-optimal) regret. As a special case, we show that if all the clusters are very close to one another, then with high probability, SCLB  identifies treats all agents as \emph{one big} cluster, yielding highest collaborative gain. 
 
Without loss of generality, in what follows, we focus on an arbitrary agent belonging to cluster $i$ and characterize her regret. Throughout this section, we assume
 \begin{align}
 \label{eqn:tau}
     T &\geq C \frac{1}{N}\bigg[\frac{\tau_{\min}(\delta) \rho_{\min}}{d \log(1/\delta)} \bigg]^{\frac{1}{2\alpha}}, \,\,\,  \\ \,\,\, \tau_{\min}(\delta) &= \bigg[\frac{16}{\rho_{\min}^2}+\frac{8}{3\rho_{\min}} \bigg] \log(\frac{2dT}{\delta})
 \end{align}
 \begin{definition}[$\alpha$-Separable Cluster]
 For a fixed $\alpha <1/2$, cluster $i \in [L]$ is termed $\alpha$-separable if $\Delta_i \geq \frac{5}{(NT)^\alpha}$. Otherwise, it is termed as $\alpha$-inseparable.
 \end{definition}
 

\begin{lemma}
\label{lem:separated_cluster}
 If CMLB is run with parameters $\gamma = 3/(NT)^\alpha$ and $p^* \leq p_i$ and $\alpha < \frac{1}{2}$, then with probability at least $1-2 \,\binom{N}{2} \delta$, any cluster $i$ that is $\alpha$-separable is clustered correctly. Furthermore, the regret of any user in the $\alpha$-separated cluster $i$ satisfies,
 \begin{align*}
    R_i(T) \leq C_1   \left [\frac{d}{\rho_{\min}}  (NT)^\alpha +  \sqrt{\frac{d}{\rho_{\min}} } (\sqrt{\frac{T-\frac{d (NT)^{2\alpha}}{\rho_{\min}} \log(1/\delta)}{p_i N}} ) \right ] \log(1/\delta),
     \end{align*}
with probability exceeding $1-4\binom{N}{2} \delta$.
\end{lemma}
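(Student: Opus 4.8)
The plan is to prove the two assertions --- correct recovery of cluster $i$ and the regret bound --- separately, and to analyze the two phases of \texttt{CMLB} (the individual exploration phase of length $T_{\text{explore}}$ and the collaborative phase over the remaining $T-T_{\text{explore}}$ rounds) in turn. Two ingredients drive everything: a single-agent OFUL estimation-and-regret guarantee valid under the stochastic contexts of \eqref{eqn:context}, and the observation that within a correctly recovered cluster all users share the same $\theta^*_i$, so that averaging the $p_i N$ per-round rewards shrinks the effective noise.

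For the clustering claim I would first establish a \emph{uniform} estimation guarantee: at the end of the exploration phase, with high probability $\|\widehat{\theta}^{(i)} - \theta^*_i\| \le (NT)^{-\alpha}$ for every agent $i$. This follows by lower bounding the minimum eigenvalue of each agent's Gram matrix by a constant fraction of $\rho_{\min} T_{\text{explore}}$ --- which is where the conditional-variance hypothesis and the matrix-concentration argument of \citep[Lemma 1]{clustering_online} enter --- and then dividing the OFUL confidence-ellipsoid radius, of order $\sqrt{d\log(T/\delta)}$, by $\sqrt{\rho_{\min} T_{\text{explore}}}$. The definition $T_{\text{explore}} = C^{(2)} d (NT)^{2\alpha}\log(1/\delta)$ is calibrated precisely so that this ratio is at most $(NT)^{-\alpha}=\gamma/3$. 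Given this, two agents in the same cluster satisfy $\|\widehat{\theta}^{(i)}-\widehat{\theta}^{(j)}\| \le 2(NT)^{-\alpha} < \gamma$, while for agents in distinct $\alpha$-separable clusters the triangle inequality combined with $\Delta_i \ge 5(NT)^{-\alpha}$ forces $\|\widehat{\theta}^{(i)}-\widehat{\theta}^{(j)}\| \ge 3(NT)^{-\alpha}=\gamma$; hence \texttt{MAXIMAL-CLUSTER} with threshold $\gamma$ and size parameter $p^*\le p_i$ recovers cluster $i$ exactly. A union bound over the $\binom{N}{2}$ pairs, each failing with probability at most $2\delta$, yields the stated $1-2\binom{N}{2}\delta$.

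For the regret, conditioned on correct clustering I decompose $R_i(T)$ over the two phases. During exploration the agent runs a single OFUL instance for $T_{\text{explore}}$ rounds; the OFUL regret bound specialized to contexts with minimum eigenvalue $\rho_{\min}$ contributes a term of order $\sqrt{d/\rho_{\min}}\,\sqrt{T_{\text{explore}}}\,\log(1/\delta)$, which after substituting $T_{\text{explore}}$ collapses to the first term $\frac{d}{\rho_{\min}}(NT)^\alpha\log(1/\delta)$. During the collaborative phase the cluster-$i$ OFUL instance is updated each round with the average reward $\langle \beta_{B_t,t},\theta^*_i\rangle + \frac{1}{p_iN}\sum_{j}\xi_t^{(j)}$; since the per-user noises are conditionally independent and sub-Gaussian, the averaged noise is conditionally sub-Gaussian with parameter $1/\sqrt{p_i N}$. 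As OFUL regret scales linearly in the noise level, over the $T-T_{\text{explore}}$ collaborative rounds the regret is at most $\frac{1}{\sqrt{p_iN}}\sqrt{d/\rho_{\min}}\,\sqrt{T-T_{\text{explore}}}\,\log(1/\delta)$, which is exactly the second term. Intersecting the clustering event with the OFUL high-probability events used in each phase costs an additional $2\binom{N}{2}\delta$, giving $1-4\binom{N}{2}\delta$.

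The hard part will be the uniform Gram-matrix lower bound underlying the estimation guarantee: the arms $\beta_{B_t,t}$ are chosen adaptively by OFUL and are therefore not drawn from a fixed distribution, so a naive matrix-Chernoff bound does not apply directly. This is precisely why the conditional-variance assumption on $(z^\top\beta_{i,t})^2$ is invoked, and the eigenvalue bound must be pushed through the machinery of \citep[Lemma 1]{clustering_online}. One must also verify that the averaged-reward process in the collaborative phase is a bona fide conditionally sub-Gaussian linear bandit, so that the black-box OFUL regret bound applies with the reduced parameter $1/\sqrt{p_iN}$, all while tracking the $\rho_{\min}$ dependence carefully; these are the steps I expect to require the most care.
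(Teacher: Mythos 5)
Your proposal is correct and follows essentially the same route as the paper's own proof: an OFUL estimation guarantee after the exploration phase calibrated so the per-agent error is at most $(NT)^{-\alpha} = \gamma/3$, a triangle-inequality argument showing same-cluster pairs fall below and separated pairs above the threshold $\gamma$, a union bound over pairs, and a two-phase regret decomposition in which the collaborative phase benefits from the $1/\sqrt{p_i N}$ noise reduction. The only difference is that you spell out the adaptive Gram-matrix concentration step more explicitly than the paper, which simply cites the OFUL analysis together with the condition in equation~\eqref{eqn:tau}.
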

We now present the regret of SCLB for the setting with separable cluster
\begin{theorem}
If Algorithm \ref{algo:size_unknown} is run for $T$ steps with parameter $\alpha < \frac{1}{2}$, then the regret of any agent in a cluster $i$ that is $\alpha$-separated satisfies
\begin{align*}
    R_i(T) \leq 4\left(2^{\frac{1}{\sqrt{p_i}}} \right) +  C_2  \bigg[ \frac{d}{\rho_{\min}} (NT)^{\alpha}    + \sqrt{\frac{dT}{\rho_{\min}{N}}} \bigg]  \log^2(T) \log (1/\delta),
\end{align*}
with probability at-least $1-cN^2\delta$. Moreover, if
{\tiny $\alpha \leq \frac{1}{2} ( \frac{\log\left[\frac{\rho_{\min}T}{d p_i N}\right]}{\log(NT)})$}, we have $R_i(T) \leq \Tilde{\mathcal{O}} [ 2^{\frac{1}{\sqrt{p_i}}} + \sqrt{\frac{d}{\rho_{\min}}} \,\, \sqrt{\frac{T}{N}} ] \log(1/\delta)$.
\label{thm:sclb_separated}
\end{theorem}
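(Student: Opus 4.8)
The plan is to bound the regret of SCLB by summing the per-phase regret of CMLB and invoking Lemma~\ref{lem:separated_cluster} in each phase whose hypotheses are met. Recall that in phase $i$ (for $1 \le i \le \log_2 T$), SCLB runs a fresh CMLB instance with horizon $2^i$, threshold $\gamma = 3/(N2^i)^\alpha$, size parameter $p^\ast = i^{-2}$, and confidence $\delta/2^i$. The key dichotomy is whether the size parameter has dropped below the true cluster fraction: Lemma~\ref{lem:separated_cluster} requires $p^\ast \le p_i$, i.e. $i^{-2}\le p_i$, equivalently $i \ge 1/\sqrt{p_i}$. I would therefore split the phases at $i_0 := \lceil 1/\sqrt{p_i}\rceil$.

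For the early phases $i < i_0$ the size parameter exceeds $p_i$ and I make no attempt to cluster correctly; since the per-round regret is bounded by an absolute constant (contexts lie in the unit ball and $\|\theta^\ast_i\|$ is bounded), the regret accumulated here is at most $\sum_{i<i_0} 2^i \le 2^{\,i_0+1} \le 4\cdot 2^{1/\sqrt{p_i}}$, which is exactly the first term. For $i \ge i_0$ I apply Lemma~\ref{lem:separated_cluster} with the substitutions $T \mapsto 2^i$ and $\delta \mapsto \delta/2^i$, so that the per-phase regret is at most
\begin{align*}
C_1\left[\frac{d}{\rho_{\min}}(N2^i)^\alpha + \sqrt{\frac{d}{\rho_{\min}}}\,\sqrt{\frac{2^i}{p_i N}}\right]\log\!\big(2^{i+1}/\delta\big).
\end{align*}
Summing over $i_0 \le i \le \log_2 T$, both geometric series are dominated by their largest ($2^i = T$) term: $\sum_i (N2^i)^\alpha \lesssim (NT)^\alpha$ and $\sum_i \sqrt{2^i/(p_iN)} \lesssim \sqrt{T/(p_iN)}$, up to a constant depending only on $\alpha$. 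The per-phase factor $\log(2^{i+1}/\delta) \le \log T + \log(1/\delta)$ across the $\le \log_2 T$ phases contributes the $\log^2 T\,\log(1/\delta)$, producing the second bracketed term. For the probability, each invoked phase fails with probability $\le 4\binom{N}{2}\delta/2^i$ by the Lemma, so a union bound gives total failure probability $4\binom{N}{2}\delta \sum_{i\ge 1} 2^{-i} \le cN^2\delta$.

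The main obstacle is verifying the Lemma's separation hypothesis at the phase horizon: it needs $\Delta_i \ge 5/(N2^i)^\alpha$, whereas the theorem only assumes $\alpha$-separability at the full scale, $\Delta_i \ge 5/(NT)^\alpha$, which is weaker for $2^i < T$. The later phases (with $2^i$ a constant fraction of $T$) do satisfy it and dominate the sum, so the delicate point is the intermediate phases $i \ge i_0$ where the cluster is not yet resolvable at scale $2^i$; there I would control the extra regret via the perturbed-OFUL bias bound, noting that any agents \texttt{MAXIMAL-CLUSTER} merges with cluster $i$ lie within $\mathcal{O}(1/(N2^i)^\alpha)$ of it, so their contribution is absorbed into the exploration term. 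Finally, the ``moreover'' clause follows since the stated bound on $\alpha$ is precisely the condition $\frac{d}{\rho_{\min}}(NT)^\alpha \le \sqrt{dT/(\rho_{\min}p_iN)}$; under it the exploration term is dominated by the collaborative term and the bound collapses to $\widetilde{\mathcal{O}}\big[2^{1/\sqrt{p_i}} + \sqrt{d/\rho_{\min}}\,\sqrt{T/N}\big]\log(1/\delta)$.
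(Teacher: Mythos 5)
Your proposal follows essentially the same route as the paper's proof: split the phases at $i_0=\lceil 1/\sqrt{p_i}\rceil$, charge unit per-round regret in the early phases to get the $4\cdot 2^{1/\sqrt{p_i}}$ term, apply Lemma~\ref{lem:separated_cluster} with $T\mapsto 2^i$ and $\delta\mapsto\delta/2^i$ in each later phase, sum the resulting geometric series (picking up the $\log^2 T$ from the crude bound $i\le\lceil\log_2T\rceil$), and union-bound the failure probabilities. The derivation of the ``moreover'' clause is also identical.

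The one place you depart from the paper is in flagging the per-phase separability condition, and this is worth dwelling on: phase $i$ runs CMLB with threshold $3/(N2^i)^{\alpha}$, so Lemma~\ref{lem:separated_cluster} at that phase requires $\Delta_i\ge 5/(N2^i)^{\alpha}$, which for $2^i<T$ is strictly stronger than the theorem's hypothesis $\Delta_i\ge 5/(NT)^{\alpha}$. The paper's proof silently applies the lemma in every phase $i\ge i_0$ without checking this. You correctly identify the obstacle, but your resolution --- that the mis-clustering bias in the unresolved intermediate phases is ``absorbed into the exploration term'' --- is not justified as stated. By the perturbed-OFUL argument of Lemma~\ref{lem:inseparable_clusters}, a phase of length $2^i$ in which clusters within $\mathcal{O}(L/(N2^i)^{\alpha})$ are merged contributes bias regret of order $L\,2^{i(1-\alpha)}/N^{\alpha}$; summed up to the first scale $T_0=(5/\Delta_i)^{1/\alpha}/N$ at which the cluster resolves, this gives an extra $L\,T_0^{1-\alpha}/N^{\alpha}$, which in the worst case ($\Delta_i$ right at the threshold, so $T_0\asymp T$) is of order $L\,T^{1-\alpha}/N^{\alpha}$ and dominates the stated bound for small $\alpha$. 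So either one must strengthen the hypothesis (separability at every phase scale, or a larger $\Delta_i$), or the bound acquires an additional $L\,T_0^{1-\alpha}/N^{\alpha}$ term; your sketch, like the paper's proof, does not close this step.
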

\begin{algorithm}[t!]
  \caption{{\ttfamily MAXIMAL-CLUSTER}}
  \begin{algorithmic}[1]
 \STATE  \textbf{Input:} All estimates $\{\hat{\theta}^{(i)}\}_{i=1}^N$, size parameter $p^{*} >0$, threshold $\gamma \geq 0$.
 \STATE Construct an undirected Graph $G$ on $N$ vertices as follows:
$
 || \widehat{\theta}^*_i - \widehat{\theta}^*_j || \leq \gamma \Leftrightarrow  i \sim_G j
$
 \STATE $\mathcal{C} \gets \{C_1, \cdots, C_k \} $ all the connected components of $G$
 \STATE $\mathcal{S}(p^*) \gets \{ C_j : |C_j| < p^{*}N \}$ \COMMENT {All Components smaller than $p^{*}N$}
 \STATE $C^{(p)} \gets \cup_{C \in \mathcal{S}(p^{*})} C$ \COMMENT {Collapse all small components into one}
 \STATE \textbf{Return :} $\mathcal{C}\setminus \mathcal{S}(p^*) \bigcup C^{(p)}$ \COMMENT {Each connected component larger than $p^*N$ is a cluster, and all small components are a single cluster}
  \end{algorithmic}
  \label{algo:clustering}
\end{algorithm}
\begin{remark}
Note that we obtain the regret scaling of $\Tilde{\mathcal{O}}(\sqrt{T/N})$, which is optimal, i.e., the regret rate matches an oracle that knows cluster membership.  The cost of successive clustering is $\mathcal{O}(2^{\frac{1}{\sqrt{p_i}}})$, which is a $T$-independent (problem dependent) constant.
\end{remark}
\begin{remark}
Note that the separation we need is only $5/(NT)^\alpha$. This is a weak condition since in a collaborative system with large $N$ and $T$, this quantity is sufficiently small.
\end{remark}
\begin{remark}
Observe that $R_i(T)$ is a decreasing function of $N$. Hence, more users in the system ensures that the regret decreases. This is collaborative gain.
\end{remark}
\begin{remark}
(Comparison with \citep{clustering_online}) Note that in a setup where clusters are separated, \citep{clustering_online} also yields a regret of $\Tilde{\mathcal{O}}(\sqrt{T/N})$. However, the separation between the parameters (gap) for \citep{clustering_online} cannot be lower than $\mathcal{O}(1/T^{1/4})$, in order to maintain order-wise optimal regret. On the other hand, we can handle separations of the order $\mathcal{O}(1/T^\alpha)$, and since $\alpha <1/2$, this is a strict improvement over \citep{clustering_online}. 
\end{remark}
\begin{remark}
The constant term $\mathcal{O}(2^{\frac{1}{\sqrt{p_i}}})$ can be removed if we have an estimate of the $p_i$. Here, instead of SCLB, we simply run CMLB with the estimate of $p_i$ and obtain the regret of Lemma~\ref{lem:separated_cluster}, without the term $\mathcal{O}(2^{\frac{1}{\sqrt{p_i}}})$. Note that in simulations (Sec.~\ref{sec:simulations}), we observe that the size input to CMLB is not needed.
\end{remark}
We now present our results when cluster $i$ is $\alpha$-inseparable.
\begin{lemma}
\label{lem:inseparable_clusters}
If CMLB  is run with input $\gamma = 3/(NT)^\alpha$ and $p^* \leq p_i$ and $\alpha < \frac{1}{2}$, then any 
 user in a cluster $i$ that is $\alpha$-inseparable satisfies
 \begin{align*}
    R(T) \leq C_1 L (\frac{T^{1-\alpha}}{N^\alpha}) +  C_2 \sqrt{\frac{d}{\rho_{\min}} } \,\, [\sqrt{\frac{T-\frac{d (NT)^{2\alpha}}{\rho_{\min}} \log(1/\delta)}{p^* N}} ] \log(1/\delta),
    \end{align*}
with probability at least $1- 4 \binom{N}{2} \delta$.
\end{lemma}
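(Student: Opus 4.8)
The plan is to split the regret of user $i$ over the two phases of CMLB---the individual learning phase of length $T_{\text{Explore}}$ and the collaborative phase of length $T - T_{\text{Explore}}$---and to control the collaborative phase through a \emph{perturbed OFUL} argument. The key structural feature of the $\alpha$-inseparable case is that \texttt{MAXIMAL-CLUSTER} may merge cluster $i$ with neighboring true clusters, so the per-cluster OFUL instance assigned to user $i$ fits not $\theta^*_i$ but the centroid of all true clusters lumped into her estimated cluster. The proof therefore hinges on (a) bounding the parameter diameter of that merged cluster and (b) showing that the bias incurred by optimizing for the centroid rather than for $\theta^*_i$ costs at most a linear-in-$T$ term of the stated order.

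First I would condition on a high-probability event $\mathcal{E}$ on which two families of bounds hold: the individual OFUL estimates are accurate, $\|\widehat{\theta}^{(i)} - \theta^*_i\| \lesssim 1/(NT)^\alpha$ for every agent (this is exactly the estimation guarantee used in Lemma~\ref{lem:separated_cluster}, coming from the choice $T_{\text{Explore}} \asymp \tfrac{d(NT)^{2\alpha}}{\rho_{\min}}\log(1/\delta)$ together with \eqref{eqn:context}); and the collaborative per-cluster OFUL instances satisfy their confidence bounds under the averaged (and hence reduced-variance) noise. A union bound over the $\binom{N}{2}$ pairs for each family gives $\Prob(\mathcal{E}) \geq 1 - 4\binom{N}{2}\delta$, matching the claimed probability.

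Next I would pin down the structure of the estimated cluster $\widehat{C}$ containing user $i$. Since $p^* \leq p_i$, cluster $i$ alone contributes $\geq p_i N \geq p^* N$ users, so the connected component of user $i$ in the graph $G$ has size $\geq p^* N$ and is retained as its own cluster (never collapsed into $C^{(p)}$); this is the source of the $p^* N$ noise-averaging factor. On $\mathcal{E}$, any edge $u \sim_G v$ forces $\|\theta^*_{c(u)} - \theta^*_{c(v)}\| \leq \gamma + 2/(NT)^\alpha \leq 5/(NT)^\alpha$. Passing to the induced graph on the at most $L$ distinct true clusters appearing in $\widehat{C}$---which is connected with at most $L$ nodes---and telescoping the per-edge bound along a path of length $\leq L-1$ yields a parameter diameter $\lesssim L/(NT)^\alpha$. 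Writing $\bar{\theta} := \tfrac{1}{|\widehat{C}|}\sum_{u \in \widehat{C}}\theta^*_{c(u)}$ for the centroid the $\widehat{C}$-OFUL effectively learns (averaging rewards over $\widehat{C}$ produces the signal $\inprod{\bar{\theta}}{\beta_{\cdot,t}}$ corrupted by noise of variance $\leq 1/|\widehat{C}| \leq 1/(p^*N)$), convexity gives $\|\theta^*_i - \bar{\theta}\| \lesssim L/(NT)^\alpha$.

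The core step is the per-round decomposition in the collaborative phase: with $j^*_t$ the arm optimal for $\theta^*_i$ and $B_{i,t}$ the arm chosen by the $\widehat{C}$-OFUL for $\bar{\theta}$,
\[
\inprod{\theta^*_i}{\beta_{j^*_t,t} - \beta_{B_{i,t},t}} = \inprod{\bar{\theta}}{\beta_{j^*_t,t} - \beta_{B_{i,t},t}} + \inprod{\theta^*_i - \bar{\theta}}{\beta_{j^*_t,t} - \beta_{B_{i,t},t}}.
\]
The first term is at most the instantaneous regret of the cluster OFUL against $\bar{\theta}$ (as $j^*_t$ is feasible in the $\bar{\theta}$-maximization); summing over the $T - T_{\text{Explore}}$ collaborative rounds and applying the OFUL bound with reduced noise $1/\sqrt{p^*N}$ produces the second term $C_2\sqrt{d/\rho_{\min}}\,\sqrt{(T - T_{\text{Explore}})/(p^*N)}\,\log(1/\delta)$ of the lemma. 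The second term is bounded by $2\|\theta^*_i - \bar{\theta}\| \lesssim L/(NT)^\alpha$ using $\|\beta_{j,t}\| \leq 1$, and summing over the collaborative rounds gives $\lesssim L\,T^{1-\alpha}/N^\alpha$. Finally, the individual-phase regret $\lesssim \tfrac{d}{\rho_{\min}}(NT)^\alpha\log(1/\delta)$ is, under the horizon assumption \eqref{eqn:tau}, dominated by this first term and hence absorbed. The main obstacle is precisely the perturbed-OFUL bias control: OFUL only certifies low regret against the parameter it fits, so everything rests on isolating the mismatch into the term $2T\|\theta^*_i - \bar{\theta}\|$ and on the diameter bound $\|\theta^*_i - \bar{\theta}\| \lesssim L/(NT)^\alpha$, which is what injects the extra $L\,T^{1-\alpha}/N^\alpha$ factor absent from the separable case.
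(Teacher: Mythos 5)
Your proposal is correct and follows essentially the same route as the paper's proof: decompose the regret into the individual-learning phase, the collaborative per-cluster OFUL phase with noise reduced by the (at least $p^*N$-sized) merged cluster, and a cluster-misspecification bias term controlled by the $O(L/(NT)^\alpha)$ parameter diameter of the merged component, then absorb the individual-phase term. Your version is if anything more explicit than the paper's (the telescoping diameter bound over the induced graph on true clusters and the per-round decomposition against the centroid $\bar{\theta}$ are stated only informally there), but the underlying argument is the same.
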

\begin{theorem}
\label{thm:inseparable}
If Algorithm \ref{algo:size_unknown} is run for $T$ steps with parameter $\alpha < \frac{1}{2}$, then the regret of any agent in a cluster $i$ that is $\alpha$-inseparable satisfies
\begin{align*}
    R_i(T) \leq 4(2^{\frac{1}{\sqrt{p_i}}})+ C \, L  (\frac{T^{1-\alpha}}{N^{\alpha}}) \log(T) + C_1  \sqrt{\frac{dT}{N\rho_{\min}}}\log(1/\delta) \ \log^2(T),
\end{align*}
with probability at-least $1-cN^2\delta$. Moreover, if
If $\alpha = \frac{1}{2}- \varepsilon$, where $\varepsilon$ is a positive constant arbitrarily close to $0$, we obtain, $
    R(T) \leq \Tilde{\mathcal{O}} \bigg [ 2^{\frac{1}{\sqrt{p_i}}} + L ( \frac{T^{\frac{1}{2}+\varepsilon}}{N^{\frac{1}{2}-\varepsilon}} ) + \sqrt{\frac{d}{\rho_{\min}} } \,\, (\sqrt{\frac{T}{N}} ) \log(1/\delta) \bigg].
$
\end{theorem}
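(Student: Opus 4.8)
The plan is to reduce Theorem~\ref{thm:inseparable} to the single-phase guarantee of Lemma~\ref{lem:inseparable_clusters} by a doubling-style aggregation over the phases of SCLB. Recall that in phase $j$ (for $1\le j\le \log_2 T$), Algorithm~\ref{algo:size_unknown} runs a fresh instance of CMLB with horizon $T_j=2^j$, threshold $\gamma_j=3/(N2^j)^\alpha$, confidence $\delta_j=\delta/2^j$, and size parameter $p^*_j=j^{-2}$. Since the regret in \eqref{eqn:regret_def} is a sum over rounds, I would first write $R_i(T)=\sum_{j=1}^{\log_2 T} R^{(j)}_i$, where $R^{(j)}_i$ is the regret of the target agent during phase $j$. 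The key bookkeeping observation is that $\gamma_j=3/(NT_j)^\alpha$ is exactly the threshold required by Lemma~\ref{lem:inseparable_clusters} applied with horizon $T_j$, so the lemma is usable phase-by-phase, but \emph{only} once its hypothesis $p^*_j\le p_i$ holds.

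This motivates splitting the phases at $j_0:=\lceil 1/\sqrt{p_i}\rceil$, the first phase where $p^*_j=j^{-2}\le p_i$. For the early phases $j<j_0$ the size parameter exceeds the true cluster fraction, so MAXIMAL-CLUSTER may collapse the agent's genuine cluster and Lemma~\ref{lem:inseparable_clusters} gives no control; here I would use only the trivial per-step bound (instantaneous regret $O(1)$ since contexts lie in $\mathbb{B}^d_2(1)$ and $\|\theta^*_i\|$ is bounded), so $R^{(j)}_i\le 2\cdot 2^j$. Because the phase lengths grow geometrically, $\sum_{j<j_0}R^{(j)}_i\le 2\sum_{j<j_0}2^j\le 4\cdot 2^{1/\sqrt{p_i}}$, which is the horizon-independent additive constant in the statement. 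The point worth emphasizing is that this cost does not interact with $T$: the ``wasted'' early phases are few, and their total length is dominated by the last one, $2^{j_0}$.

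For the late phases $j\ge j_0$ I would invoke Lemma~\ref{lem:inseparable_clusters} with parameters $(T_j,\gamma_j,\delta_j,p^*_j)$, whose two terms are $C_1 L (2^j)^{1-\alpha}/N^\alpha$ and $C_2\sqrt{d/\rho_{\min}}\,\sqrt{2^j/(p^*_j N)}\,\log(1/\delta_j)$. The first, summed over $j$, is a geometric series in $2^{(1-\alpha)j}$ that already saturates at its final term $T^{1-\alpha}/N^\alpha$; bounding the number of summands by $\log_2 T$ produces the factor $\log T$ in the statement. For the second, substituting $p^*_j=j^{-2}$ replaces $\sqrt{1/p^*_j}$ by the factor $j$, while $\log(1/\delta_j)=\log(2^j/\delta)$; the underlying geometric series in $2^{j/2}$ again saturates at $j=\log_2 T$, where these extra factors take the values $j=\log_2 T$ and $\log(2^j/\delta)\le \log_2 T+\log(1/\delta)$, yielding the $\log^2 T\,\log(1/\delta)$ dependence. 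Adding the two sums to the early-phase constant gives the claimed bound $R_i(T)\le 4(2^{1/\sqrt{p_i}})+CL(T^{1-\alpha}/N^\alpha)\log T+C_1\sqrt{dT/(N\rho_{\min})}\,\log(1/\delta)\log^2 T$.

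For the high-probability statement I would union bound over phases: Lemma~\ref{lem:inseparable_clusters} fails in phase $j$ with probability at most $4\binom{N}{2}\delta_j$, and since $\sum_j\delta_j=\delta\sum_j 2^{-j}\le\delta$, the total failure probability is at most $cN^2\delta$; this is precisely why $\delta_j$ is chosen to decay geometrically. The $\alpha=\tfrac12-\varepsilon$ corollary then follows by substitution, since $T^{1-\alpha}=T^{1/2+\varepsilon}$ and $N^{-\alpha}=N^{-(1/2-\varepsilon)}$. I expect the main obstacle to be the early-phase argument: one must show that overshooting the cluster fraction ($p^*_j>p_i$) costs at most the trivially-bounded phase regret and, crucially, that these phases terminate before contributing any $T$-dependent term. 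The geometric growth of phase lengths is exactly what makes $2^{1/\sqrt{p_i}}$ a genuine constant rather than a quantity entangled with $T$.
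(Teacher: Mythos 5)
Your proposal follows essentially the same route as the paper's proof: decompose the regret over the doubling phases, absorb the early phases $j<\lceil 1/\sqrt{p_i}\rceil$ (where $p^*_j>p_i$) into the horizon-independent constant $4\cdot 2^{1/\sqrt{p_i}}$ via the trivial per-step bound, apply Lemma~\ref{lem:inseparable_clusters} phase-by-phase once $j^{-2}\le p_i$, union bound over the geometrically decaying confidence parameters $\delta/2^j$, and let the geometric sums saturate at the final phase with the $\log T$ and $\log^2 T$ factors arising exactly as you describe from bounding $j\le\log_2 T$ and $\log(2^j/\delta)$. This matches the paper's argument, so no further comparison is needed.
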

\begin{remark}
As $\varepsilon >0$, the regret scaling of $\Tilde{\mathcal{O}}( \frac{T^{\frac{1}{2}+\varepsilon}}{N^{\frac{1}{2}-\varepsilon}} )$ is strictly worse than the optimal rate of $\Tilde{\mathcal{O}}(\sqrt{T/N})$. This can be attributed to the fact that the gap (or SNR) can be arbitrarily close to $0$, and inseparability of the clusters makes the problem harder to address.
\end{remark}
\begin{remark}
In this setting of low gap (or SNR), where the clusters are inseparable, most existing algorithms (for example \citep{clustering_online}) are not applicable. However, we still manage to obtain sub-optimal but non-trivial regret with high probability.
\end{remark}
\vspace{-2mm}
\paragraph{Special case of all clusters being close}
If $\max_{i\neq j}\|\theta^*_i -\theta^*_j \| \leq 1/(NT)^\alpha$, CMLB puts all the users in one big cluster. The collaborative gain in this setting is the largest. Here the regret guarantee of SCLB will be similar to that of Theorem~\ref{thm:inseparable} with $p_i =1$. We defer to Appendix~\ref{sec:all_club} for a detailed analysis.

\begin{remark}
Observe that if all agents are identical $\max_{i\neq j}\|\theta^*_i -\theta^*_j \| =0$  our regret bound does not match that of an \emph{oracle} which knows such information. The oracle guarantee would be $\mathcal{O}(\sqrt{T/N})$, whereas our guarantee is strictly worse. The additional regret stems from the universality of our algorithm as it works for all ranges of $\Delta_i$. 
\end{remark}

\section{PERSONALIZATION}
\label{sec:personalization}

In this section, we assume that the users' representations $\{\theta^*_i\}_{i=1}^N$ are similar but not necessarily identical. Of course, without any structural similarity among $\{\theta^*_i\}_{i=1}^N$, the only way-out is to learn the parameters separately for each user. In the setup of personalized learning, it is typically assumed that (see \citep{yang2021impact, collins2021exploiting, NEURIPS2020_24389bfe,ditto-virginia} and the references therein) that the parameters $\{\theta^*_i\}_{i=1}^N$ share some commonality, and the job is to learn the shared components or representations of $\{\theta^*_i\}_{i=1}^N$  collaboratively. After learning the common part, the individual representations can be learnt locally at each agent.

Similar to Section~\ref{sec:setup}, the contexts $\beta_{i,t}$-s are drawn independent of the past from a distribution such that $\beta_{i,t}$ is independent of $\{\beta_{j,t}\}_{j \neq i}$. 
In this section, we assume that $\beta_{i,t}$-s are drawn from~$\mathsf{Unif}[-1/\sqrt{d},1/\sqrt{d}]^{\otimes d}$ instead of the unit ball $\mathbb{B}_d^{(1)}$. 
The choice of uniform distribution is for simplicity, in general any distribution supported on $[-c,c]^{\otimes d}$ suffices. The scaling of $1/\sqrt{d}$ ensures that the norm is $\mathcal{O}(1)$. With this, the conditions of equation~\eqref{eqn:context} are satisfied with $\rho_{\min} = c_0/d$ (where $c_0$ is a constant) using \citep{vershynin2011introduction}. This is without loss of generality, as it simplifies the exposition. 

\begin{algorithm}[t!]
  \caption{Personalized Multi-agent Linear Bandits (PMLB)}
  \begin{algorithmic}[1]
 \STATE  \textbf{Input:} Agents $N$, Horizon $T$ \\ \vspace{1mm}
 \textbf{Common representation learning : Estimate $\Bar{\theta}^*=\frac{1}{N}\sum_{i=1}^N \theta^*_i$} \\
 \vspace{1mm}
 \STATE Initialize a single instance of OFUL($\delta$), called common OFUL 
 \FOR {times $t \in \{1,\cdots,\sqrt{T}\}$}
 \STATE All agents play the action given by the common OFUL
 \STATE Common OFUL's state is updated by the average of observed rewards at all agents
 \ENDFOR
 \STATE $\widehat{\theta}^* \gets $ the parameter estimate of Common OFUL at the end of round $\sqrt{T}$\\ \vspace{1mm}
 \textbf{Personal Learning} \\ \vspace{1mm}
 \FOR {agents $i \in \{1,\ldots,N\}$ \textbf{in parallel}}
 \STATE Initialize one ALB-Norm($\delta$) of \citep{ghosh_adaptive} instance per agent (reproduced in Algorithm \ref{algo:alb_norm} in Supplementary Material)
 \FOR {times $t \in \{\sqrt{T}+1,\ldots,T\}$}
 \STATE Agents play arm output by their personal copy of ALB-Norm (denoted as $\beta_{b_t^{(i)},t}$) and receive reward $y_t$
 \STATE Every agent updates their ALB-Norm state with corrected reward $ \Tilde{y}^{(t)}_i = y^{(t)}_i -\langle \beta_{b_t^{(i)},t}, \Hat{\theta}^* \rangle $
 
 \ENDFOR
 \ENDFOR
  \end{algorithmic}
  \label{algo:personal}
\end{algorithm}

We now define the notion of common representation across users. For consistency, similar to Section~\ref{sec:setup}, we assume $\| \theta^*_l \| \leq 1$ for all $l\in [N]$. We define $\Bar{\theta}^* = \frac{1}{N}\sum_{l=1}^N \theta^*_l$ as the average parameter.

\begin{definition}
($\epsilon$ common representation) An agent $i$ has $\epsilon_i$ common representation across $N$ agents if \, $\|\theta^*_i - \Bar{\theta}^* \| \leq \epsilon_i$, where $\epsilon_i$ is defined as the common representation factor.
\end{definition}
The above definition characterizes how far the representation of agent $i$ is from the average representation $\Bar{\theta}^*$. Note that since $\|\theta^*_l\| \leq 1$ for all $l$, we have $\epsilon_i \leq 2$. Furthermore, if $\epsilon_i$ is small, one can hope to exploit the common representation across users. On the other hand, if $\epsilon_i$ is large (say $\mathcal{O}(1)$), there is no hope to leverage collaboration across agents.

\subsection{The PMLB Algorithm}

Algorithm~\ref{algo:personal} works in two phases \textemdash {\em (i)} a common representation learning and {\em (ii)} a personal fine-tuning.

\textbf{Common Representation Learning:} In the first phase, PMLB learns the average representation $\Bar{\theta}^*$ by recommending the same arm to all users and averaging the obtained rewards. At the end of this phase, the center has the estimate $\Hat{\theta}^*$ of the average representation $\Bar{\theta}^*$. Since the algorithm aggregates the reward from all $N$ agents, it turns out that the common representation learning phase can be restricted to $\sqrt{T}$ steps.


\textbf{Personal Fine-tuning} In the personal learning phase, the center learns the vector $\theta^*_i - \Hat{\theta}^*$, \emph{independently} for every agent. For learning $\theta^*_i - \Hat{\theta}^*$, we employ the Adaptive Linear Bandits-norm (\texttt{ALB-norm}) algorithm of \citep{ghosh_adaptive}. \texttt{ALB-norm} is adaptive, yielding a norm dependent regret, i.e., depends on $\|\theta^*_i - \Hat{\theta}^*\|$. The idea here is to exploit the fact that in the common learning phase we have a good estimate of $\Bar{\theta}^*$. Hence, if the common representation factor $\epsilon_i$ is small, then $\|\theta^*_i - \Hat{\theta}^*\|$ is small, and it reflects in the regret expression. In order to estimate the difference, the center \emph{shifts} the reward by the inner product of the estimate $\Hat{\theta}^*$. By exploiting the anti-concentration property of Chi-squared distribution along with some standard results from optimization, we show that the regret of the shifted system is worse than the regret of agent $i$ (both in expectation and in high probability)\footnote{This is intuitive since, otherwise one can find \emph{appropriate shifts} to reduce the regret of OFUL, which contradicts the optimality of OFUL.}.


\subsection{Regret Guarantee for PMLB}



\begin{theorem}
\label{thm:personal}
Playing Algorithm ~\ref{algo:personal} with $T$ time and $\delta$, where $T \geq \tau_{\min}^2(\delta)$ ($\tau_{\min}(\delta)$ is defined in eqn.~\eqref{eqn:tau}) and $d \geq C \log (K^2 T)$, then the regret of agent $i$ satisfies
\begin{align*}
    R_i(T) \leq \Tilde{\mathcal{O}} ( \epsilon_i\, \sqrt{dT}  + \,\, T^{1/4}\,\, \sqrt{\frac{d^2}{\rho_{\min} N}}) \log^2(1/\delta),
\end{align*}
with probability at least $1-c\delta - \frac{1}{\poly(T)}$.
\end{theorem}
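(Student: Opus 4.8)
The plan is to split the horizon at the phase boundary $\sqrt{T}$, control the common-representation phase and the personal phase separately, and let the quality of the estimate $\Hat{\theta}^*$ link the two. Throughout I would condition on the good event of the OFUL/ALB-Norm confidence ellipsoids and of the design-matrix concentration, collecting the failure probabilities at the end.

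\textbf{Common-representation phase.} In the first $\sqrt{T}$ rounds all agents play the arm chosen by the single common OFUL instance, which is updated with rewards averaged over the $N$ agents. First I would record that this averaging reduces the effective noise variance to $1/N$, so OFUL's confidence ellipsoid evaluated after $\sqrt{T}$ plays yields the estimation bound $\|\Hat{\theta}^* - \Bar{\theta}^*\| \lesssim T^{-1/4}\sqrt{d/(\rho_{\min} N)}$. The hypothesis $T \geq \tau_{\min}^2(\delta)$ is exactly what makes $\sqrt{T} \geq \tau_{\min}(\delta)$, so that by the conditional-covariance lower bound in~\eqref{eqn:context} together with a matrix-concentration argument (as in~\citep[Lemma 1]{clustering_online}) the empirical design matrix has minimum eigenvalue $\gtrsim \rho_{\min}\sqrt{T}$ on a $1-\order(\delta)$ event, which is what the estimation bound needs. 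For agent $i$'s own regret in this phase I would write $\theta^*_i = \Bar{\theta}^* + (\theta^*_i - \Bar{\theta}^*)$: the OFUL regret against $\Bar{\theta}^*$ over $\sqrt{T}$ rounds is $\ordertil(\sqrt{d}\,T^{1/4}/\sqrt{N})$, while the mismatch between the two objectives costs at most $2\epsilon_i$ per round, i.e. $2\epsilon_i\sqrt{T}\le \epsilon_i\sqrt{dT}$ overall. Both contributions already sit inside the target bound.

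\textbf{Personal phase.} After the shift, agent $i$'s ALB-Norm copy is driven by $\Tilde{y} = \langle\beta,\theta^*_i - \Hat{\theta}^*\rangle + \xi$, so the only remaining uncertainty is the residual $\theta^*_i - \Hat{\theta}^*$, whose norm I would bound by the triangle inequality and the previous paragraph, $\|\theta^*_i - \Hat{\theta}^*\|\le \epsilon_i + T^{-1/4}\sqrt{d/(\rho_{\min} N)}$. Feeding this into the norm-adaptive guarantee of ALB-Norm~\citep{ghosh_adaptive}, which gives regret $\ordertil(\|\theta^*_i - \Hat{\theta}^*\|\sqrt{dT})$ against the shifted target, and expanding the residual bound reproduces the two advertised terms $\epsilon_i\sqrt{dT}$ and $T^{1/4}\sqrt{d^2/(\rho_{\min} N)}$.

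\textbf{The crux.} The delicate point is that ALB-Norm's guarantee is phrased against the \emph{shifted} optimum (the best arm for $\theta^*_i - \Hat{\theta}^*$), whereas $R_i(T)$ in~\eqref{eqn:regret_def} is measured against the \emph{true} optimum (the best arm for $\theta^*_i$), and the two need not coincide. This is precisely where I would invoke the shifted-OFUL lemma (Lemma~\ref{lem:shift-oful}), which asserts that compensating the rewards by a fixed vector can only increase the regret, so that the true regret is dominated by the shifted-problem regret that ALB-Norm controls. To establish it I would fix the shift $\Hat{\theta}^*$, express the per-round gap between true and shifted regret through the convex map $\phi\mapsto \E\max_a\langle\phi,\beta_{a,t}\rangle$, and combine a subgradient (monotonicity-of-gradient) inequality with an anti-concentration estimate for the chi-squared-type statistic $\langle\phi,\beta_{a,t}\rangle$ under the uniform context law; the latter guarantees that the maximizing arm is separated from the runner-up, hence stable under the deterministic shift, forcing the gap to be non-positive in expectation and, after a union bound over the $K$ arms and $T$ rounds (which is what $d\ge C\log(K^2T)$ buys), also with probability $1-1/\poly(T)$. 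I expect this anti-concentration/stability step to be the main obstacle; the remaining work is the routine bookkeeping that assembles the phase-wise bounds and the failure probabilities into the stated $1-c\delta-1/\poly(T)$ guarantee.
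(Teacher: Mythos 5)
Your proposal matches the paper's proof essentially step for step: the same phase-wise decomposition with the noise-variance reduction by $N$ in the common phase, the same $\epsilon_i\sqrt{T}$ misspecification term, the same triangle-inequality bound on $\|\theta^*_i - \Hat{\theta}^*\|$ fed into ALB-Norm's norm-adaptive guarantee, and the same resolution of the crux via the shifted-OFUL lemma, proved by showing the argmax arm is stable under the shift with probability $1-1/\poly(T)$ using anti-concentration of the pairwise context gaps and a union bound over the $K^2$ arm pairs (which is exactly what $d \ge C\log(K^2T)$ buys). No substantive differences from the paper's argument.
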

\begin{remark}
The leading term in regret is $\Tilde{\mathcal{O}}(\epsilon_i \sqrt{dT})$. If the common representation factor $\epsilon_i$ is small, PMLB exploits that across agents and as a result the regret is small as well. 
\end{remark}
\begin{remark}
Moreover, if $\epsilon_i$ is big enough, say $\mathcal{O}(1)$, this implies that there is no common representation across users, and hence collaborative learning is meaning less. In this case, the agents learn individually (by running OFUL), and obtain a regret of $\Tilde{\mathcal{O}}(\sqrt{dT})$ with high probability. Note that this is being reflected in Theorem~\ref{thm:personal}, as the regret is $\Tilde{\mathcal{O}}(\sqrt{dT})$, when $\epsilon_i = \mathcal{O}(1)$.
\end{remark}
The above remarks imply the adaptivity of PMLB. Without knowing the common representation factor $\epsilon_i$, PMLB indeed adapts to it---meaning that yields a regret that depends on $\epsilon_i$. If $\epsilon_i$ is small, PMLB leverages common representation learning across agents, otherwise when $\epsilon_i$ is large, it yields a performance equivalent to the individual learning. Note that this is intuitive since with high $\epsilon_i$, the agents share no common representation, and so we do not get a regret improvement in this case by exploiting the actions of other agents.

\begin{remark} (Lower Bound)
When $\epsilon_i = 0$, i.e., in the case when all agents have the identical vectors $\theta^*_i$, then Theorem \ref{thm:personal} gives a regret scaling as $R_i(T) \leq \widetilde{\mathcal{O}}( T^{1/4}d\sqrt{\frac{1}{\rho_{\min}N}})$. When the contexts are adversarily generated, \cite{chu2011contextual} obtain a lower bound (in expectation) of $\Omega(\sqrt{d T})$. However, in the presence of stochastic context, a lower bound on the contextual bandit problem is unknown to the best of our knowledge.
\label{remark:zero_epsilon}
\end{remark}

The requirement on $d$ in Theorem~\ref{thm:personal} can be removed if we consider the expected regret.
\begin{corollary}
\label{cor:exp-regret}
(Expected Regret) Suppose $T \geq \tau_{\min}^2(\delta)$ for $\delta>0$. The expected regret of the $i$-th agent after running Algorithm~\ref{algo:personal} for $T$ time steps is given by
\begin{align*}
    \mathbb{E}[R_i(T)] \leq \Tilde{\mathcal{O}} (\epsilon_i\, \sqrt{dT}  + \,\, T^{1/4}\,\, \sqrt{\frac{d^2}{\rho_{\min} N}}).
\end{align*}
\end{corollary}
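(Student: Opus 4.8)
The plan is to derive the corollary from the high-probability statement of Theorem~\ref{thm:personal} by the standard good-event/bad-event split, while replacing the single step of that proof that required $d \geq C\log(K^2T)$ with its first-moment analogue. Let $\mathcal{E}$ denote the event on which the conclusion of Theorem~\ref{thm:personal} holds, so that $\Prob(\mathcal{E}^c) \leq c\delta + 1/\poly(T)$. First I would decompose
\begin{align*}
\E[R_i(T)] = \E[R_i(T)\,\ind{\mathcal{E}}] + \E[R_i(T)\,\ind{\mathcal{E}^c}].
\end{align*}
On $\mathcal{E}$ the first term is bounded by the rate in Theorem~\ref{thm:personal}, with the $\log^2(1/\delta)$ factor absorbed into $\Tilde{\mathcal{O}}$. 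For the second term, the instantaneous regret in any round is at most $2\max_{j}\|\beta_{j,t}\|\,\|\theta^*_i\| \leq 2$, since the contexts lie in a bounded set and $\|\theta^*_i\|\leq 1$; hence $R_i(T)\leq 2T$ deterministically and the bad-event contribution is at most $2T\,(c\delta + 1/\poly(T))$. Choosing $\delta = 1/T$ makes this $\Tilde{\mathcal{O}}(1)$ (taking the polynomial degree at least one), which is dominated by the stated bound. This disposes of every ingredient except the hypothesis $d \geq C\log(K^2T)$.

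The heart of the matter is removing that dimension condition. The only place it enters the proof of Theorem~\ref{thm:personal} is the step that promotes the shift comparison of Lemma~\ref{lem:shift-oful} to a statement holding simultaneously over all $K$ arms and all $T$ rounds: the $\chi^2_d$ anti-concentration used there fails with probability $e^{-c'd}$ per (arm,\,round) pair, so a union bound forces $e^{-c'd}K^2T$ to be small, i.e.\ $d \gtrsim \log(K^2T)$. For the expected regret this union bound is unnecessary. Instead I would invoke the in-expectation form of Lemma~\ref{lem:shift-oful} — the text already records that shifting can only increase regret ``both in expectation and in high probability'' — and take conditional expectation over the context draw $\beta_{\cdot,t}$ in each round. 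The expected per-round increase from shifting the rewards by $\hat{\theta}^*$ is nonnegative by the optimality of OFUL, and the first moment of the relevant $\chi^2_d$ quantity is controlled for every $d\geq 1$, whereas only its lower tail needed $d$ large. Summing these per-round expectations and combining with the norm-adaptive ALB-norm guarantee $\Tilde{\mathcal{O}}(\|\theta^*_i-\hat{\theta}^*\|\sqrt{dT})$ reproduces the leading term $\Tilde{\mathcal{O}}(\epsilon_i\sqrt{dT})$ with no lower bound on $d$, using $\|\theta^*_i-\hat{\theta}^*\| \leq \epsilon_i + \|\bar{\theta}^*-\hat{\theta}^*\|$; the estimation error $\|\bar{\theta}^*-\hat{\theta}^*\|$ from the $\sqrt{T}$-round, $N$-agent common phase propagates through the same $\sqrt{dT}$ factor to yield the second term $\Tilde{\mathcal{O}}(T^{1/4}\sqrt{d^2/(\rho_{\min}N)})$.

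The main obstacle is precisely this shift step under expectation. One must verify that the comparison between the shifted-system regret and agent $i$'s true regret survives for all $d$ once the tail/union-bound argument is abandoned — concretely, that the expectation (over the stochastic contexts, taken conditionally on $\hat{\theta}^*$ and on the filtration $\mathcal{F}_{t-1}$) of the round-by-round gap remains nonnegative, so that ALB-norm's guarantee applied to learning $\theta^*_i-\hat{\theta}^*$ still upper-bounds the regret measured against $\theta^*_i$. The delicate point is the conditional nature of this expectation: the shift argument must be checked to hold conditionally on $\mathcal{F}_{t-1}$, after which linearity of expectation replaces the union bound and the remaining work is the routine good/bad-event bookkeeping above, together with reusing the common-phase estimation bound already established in the proof of Theorem~\ref{thm:personal}.
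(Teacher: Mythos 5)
Your proposal is correct and follows essentially the same route as the paper: the paper likewise converts the common-phase high-probability bounds to expectation by the standard tail argument, and handles the personal phase by replacing the high-probability shift comparison of Lemma~\ref{lem:shift-oful} with its in-expectation counterpart (Corollary~\ref{cor:zero-mean}), which is exactly how the hypothesis $d \geq C\log(K^2T)$ is dropped. Your added care about the conditional (on $\mathcal{F}_{t-1}$) validity of the expected shift comparison is a reasonable refinement, but the underlying decomposition and key lemma are the same as the paper's.
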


\section{SIMULATIONS}
\label{sec:simulations}
In this section, we validate our theoretical finding with synthetic as well as real data.
\begin{figure}[t!]
\label{fig:synthetic}
\centering
    \subfloat[][]{
    \includegraphics[width=0.33\linewidth]{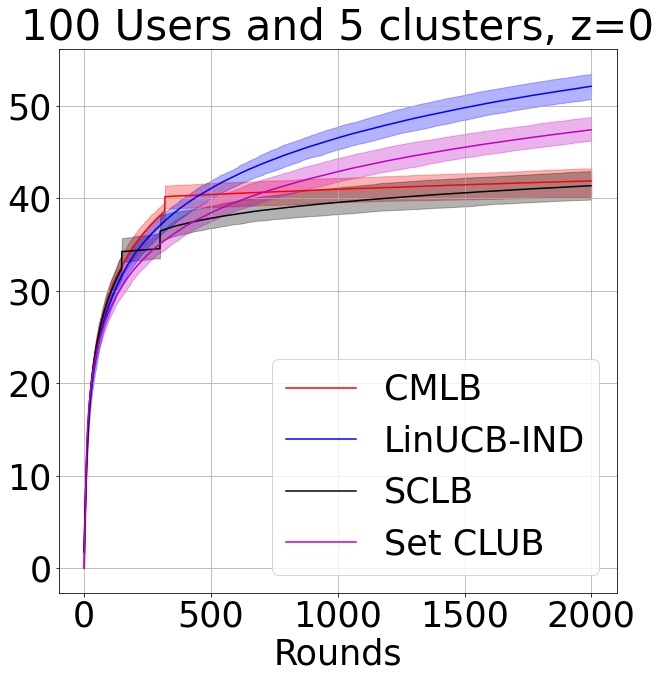}}
    \subfloat[][]{
    \includegraphics[width=0.33\linewidth]{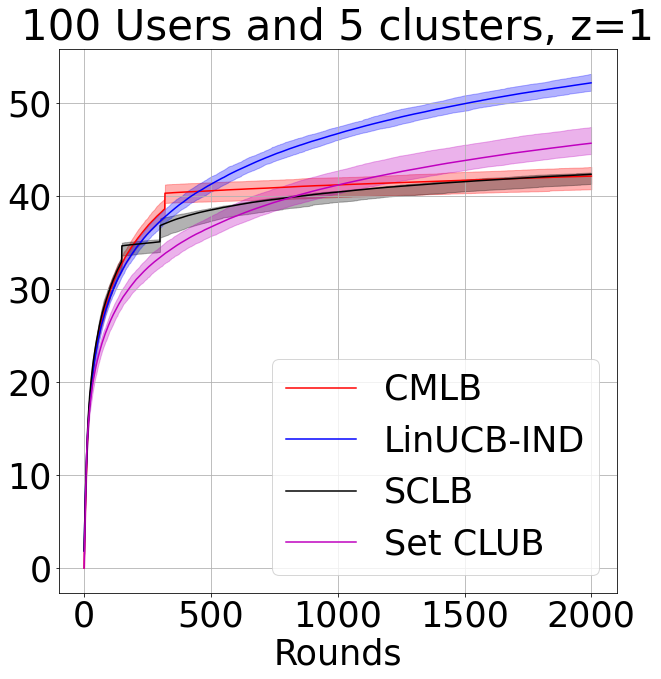}}
    \subfloat[][]{
    \includegraphics[width=0.33\linewidth]{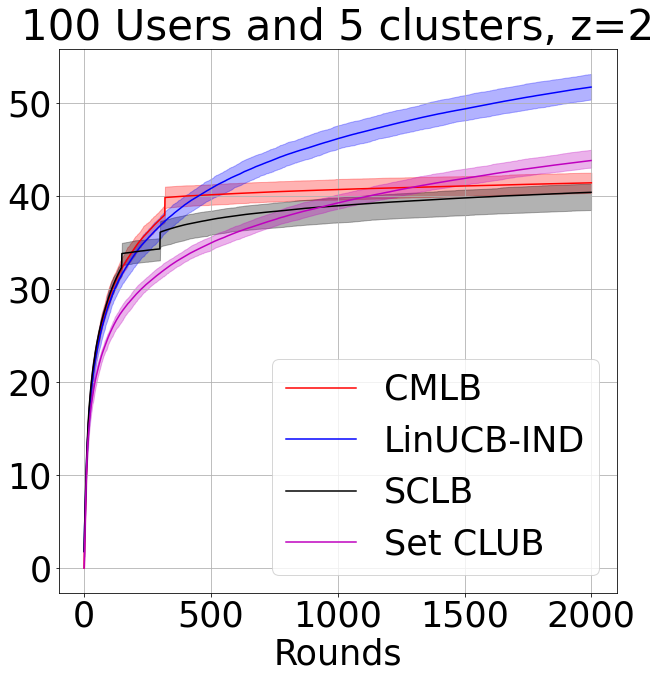}}
    \caption{Synthetic data simulations for clustering.}
    \label{fig:cmlb_simulations}
\end{figure}
\vspace{-2mm}
\subsection{Synthetic Simulations}
\label{sec:synthetic_simulations}
\textbf{Clustering setting :} For each plot of Figures \ref{fig:cmlb_simulations},  users are clustered such that the frequency of cluster $i$ is proportional to $i^{-z}$ (identical to that done in \citep{clustering_online}), where $z$ is mentioned in the figures. Thus for $z=0$, all clusters are balanced, and for larger $z$, the clusters become imbalanced. For each cluster, the unknown parameter vector $\theta^*$ is chosen uniformly at random from the unit sphere. We compare SCLB (ALgorithm~\ref{algo:size_unknown}), CMLB (Algorithm \ref{alg:CMLB}) with  CLUB \citep{clustering_online}, Set CLUB \citep{set_club} and {\ttfamily LinUCB-Ind} which is the simple baseline of no collaboration, where every agent has an independent copy of OFUL. The details of the setup and hyper-parameters are in  Appendix~\ref{app:exp}.  We observe that our algorithm is competitive with respect to CLUB and Set CLUB, and is superior compared to the baseline where each agent is playing an independent copy of OFUL. In particular, we observe either as the clusters become more imbalanced, or as the number of users increases, SCLB and CMLB have a superior performance compared to CLUB and Set CLUB. Furthermore, since SCLB only clusters users logarithmically many number of times, its run-time is faster compared to CLUB.

\textbf{Personalization setting}:  In Figure \ref{fig:pmlb_simulations}, for each plot, we consider a system where the $N$ ground-truth $\theta^*$ vectors are sampled independently from $\mathcal{N}(\mu, \sigma \mathbb{I})$, the normal distribution in $d$ dimensions with mean $\mu \in \mathbb{R}^d$, and variance $\sigma$. The parameter $\mu$ was chosen from the standard normal distirbution in each experiement. We test performance for different values of $\sigma$. Observe that for small $\sigma$, all the ground-truth vectors will be close-by (high structure) and when $\sigma$ is large, the ground-truth vectors are more spread out. We observe in Figure \ref{fig:pmlb_simulations} that PMLB adapts to the available structure. When $\sigma$ is low, in which case every user is close to the average, the regret of PMLB is much lower compared to the baselines. On the other hand, when $\sigma$ is large, i.e., there is no structure to exploit, the regret of PMLB is comparable to the baselines. This demonstrates empirically that PMLB adapts to the problem structure and exploits it whenever present. 




\vspace{-2mm}
\subsection{Comparison on Last.FM Dataset}
\label{sec:real_data}

We compare the CLUB and CMLB on the Last.FM, a real dataset amenable to clustering setting \citep{clustering_online}. LastFM is a collection of $1892$ users and $17632$ artists. This dataset contains records of {\ttfamily (user, artist, tags)} denoting that a user listened to an artist and assigned a tag. More details of the pre-processing, setup and hyper-parameters are in the Appendix. For the two algorithms we plot the ratio of cumulative regret to that obtained by recommending an artist at random each time in Figure~\ref{fig:real_data}. We see that CMLB is competitive. However, the sparsity, renders the task challenging and our results indicate that neither algorithms perform well on this dataset. Note that we observe similar behavior of SCLB as well.



%





\begin{figure}[t!]
\label{fig:synthetic}
\centering
\subfloat[][]{
    \includegraphics[width=0.33\linewidth]{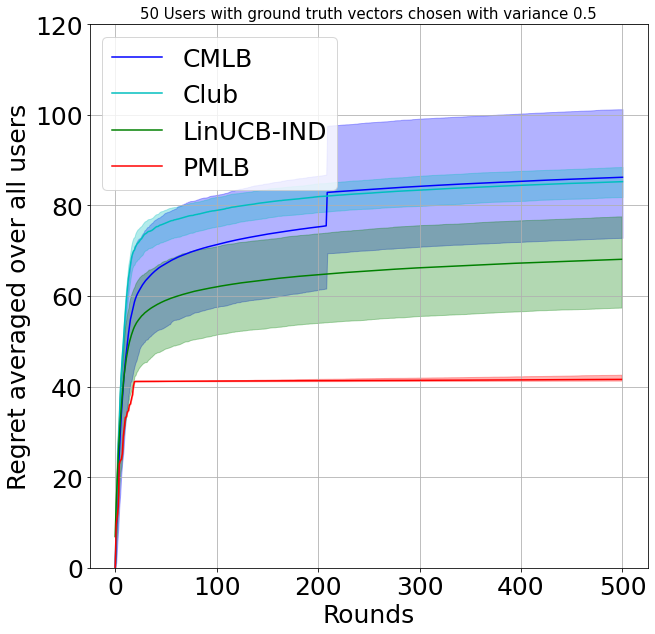}}
    \subfloat[][]{
    \includegraphics[width=0.33\linewidth]{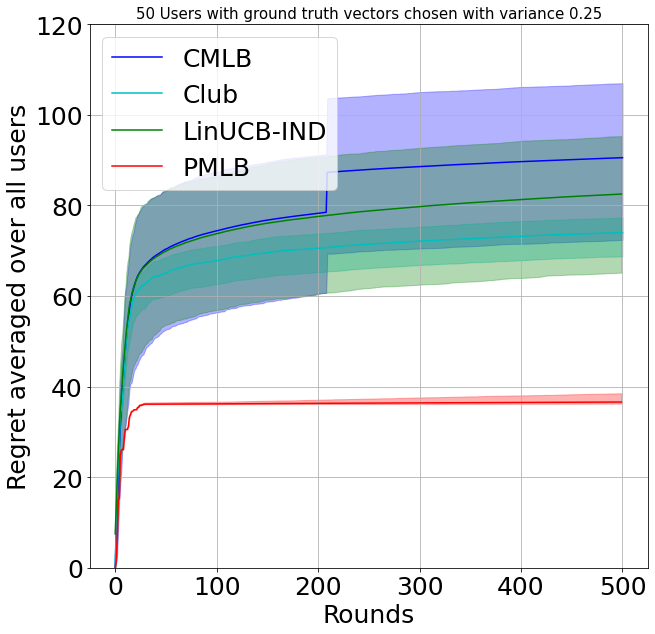}}
    \subfloat[][]{
    \includegraphics[width=0.33\linewidth]{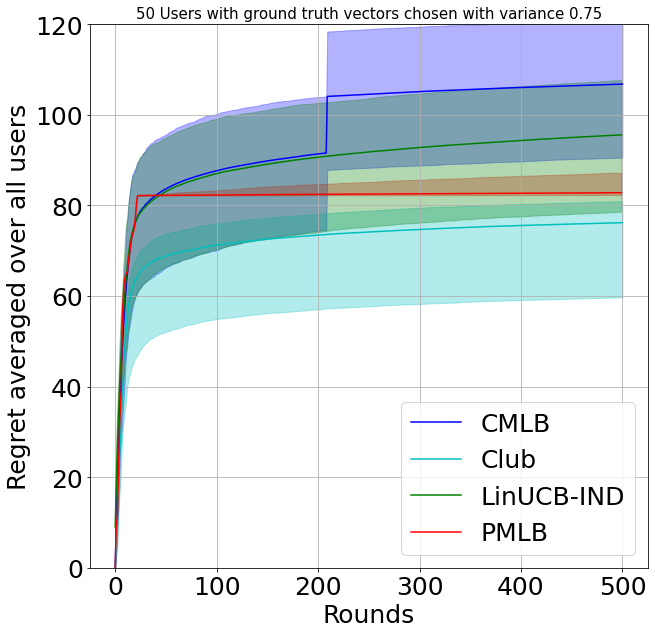}}
    \caption{Synthetic simulations of PMLB.}
    \label{fig:pmlb_simulations}
\end{figure}

\begin{figure}[htb!]
    \centering
    \includegraphics[width=0.35\linewidth]{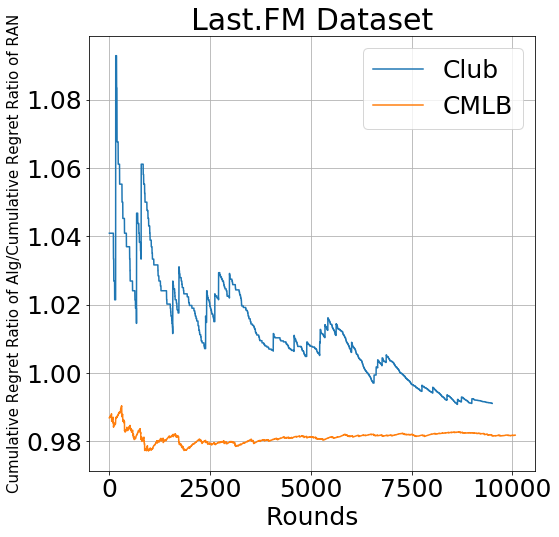}
    \caption{Perf on Last.FM data. Details in Sec \ref{sec:real_data}}
    \label{fig:real_data}
\end{figure}

\section{CONCLUSION}
\vspace{-1mm}
We consider the problem of leveraging user heterogeneity  in a multi-agent stochastic bandit problem under {\em (i)} a clustering and, {\em (ii)} a personalization framework. In both cases, we give novel adaptive algorithms that, without any knowledge of the underlying instance, provides regret guarantees that are sub-linear in $T$ and $N$. A natural avenue for future work will be to combine the two frameworks, where users are all not necessarily identical, but at the same time, their preferences are spread out in space (for example the preference vectors are sampled from a Gaussian mixture model). Natural algorithms here will involve first performing a  clustering on the population, followed by algorithms such as PMLB. Characterizing performance and demonstrating adaptivity in such settings is left to future work. 

\vspace{5mm}

\bibliographystyle{abbrvnat}
\bibliography{clustered_bandits}

\newpage
\begin{center}
    \textbf{\Large{Supplementary Material for ``Adaptive Clustering and Personalization in  Multi-Agent Stochastic Linear Bandits''}}
\end{center}
\vspace{4mm}
\section{Additional Details on Simulations}
\label{app:exp}

\subsection{Synthetic Data}

\textbf{Setup:} In each setting, we simulate all algorithms with the $25$ context-vectors, each of dimension $15$, sampled at random from  $\mathsf{Unif}[-1/\sqrt{d},1/\sqrt{d}]^{\otimes d}$. The plots in Figures~\ref{fig:cmlb_simulations} and \ref{fig:pmlb_simulations} show the regret averaged over all users, after each algorithm has taken $1000$ steps for all users with 30 repetetions. SCLB, CMLB and LinUCB-Ind take a total of $1000$ rounds, while CLUB takes $1000 \times \text{{\ttfamily num-agents}}$ rounds. For CLUB, users are picked in a round-robin fashion, with all users shown the same set of contexts in a batch. Thus at the $t$-th arm-pull in all algorithms, all users have the same set of contexts. We repeat 30 times and plot $95$-th percentile confidence bounds of the regret averaged over users.

\textbf{Hyper-parameters :} For CMLB, in all experiments, we use $\delta = 0.4$, $\alpha = 0.2$, $C = 0.2$ and $p^{*}=0$. For LinUCB, we use $\lambda=1$. For CLUB, we tuned the two hyper-parameters $\alpha$ and $\alpha_2$ for each setting, by considering the performance over the first 500 rounds and choosing the best one.

\subsection{Real Data}

\textbf{Data:} LastFM is a collection of $1892$ users and $17632$ artists. This dataset contains records of {\ttfamily (user, artist, tags)} denoting that a user listened to an artist and assigned a tag.
 We convert this into a multi-agent recommendation task, identical to the setting considered in \citep{clustering_online} and \citep{gang_bandits}. 
We break down all tags into atomic units, exactly as suggested in \citep{clustering_online, gang_bandits}, and assign to every artist, the collection of assigned atomic tags by all users. We then extract the top $25$ principal components from the \emph{tf-idf} matrix of artists and atomic tags as the context vectors for artists. Thus, each artist is a $25$ dimensional vector. The reward for a {\ttfamily (user,artist)} pair is $1$ if present in the dataset; else $0$.

\textbf{Setup:} We consider a time-horizon of $10000$ - CMLB was simulated for $10000$ rounds and CLUB until all users had taken $10000$ steps. At a given time instant, a set of $25$ randomly sampled items was shown as contexts for CMLB. For CLUB, we first chose a user by picking them in a round robin fashion, and choose $24$ items at random and one item at random from among those the user had listened to. This way, we ensure that at each time, the best reward for CLUB is at-least one. However, for CMLB, as at each time step all users play, such a guarantee cannot be made. This makes the learning setting harder for CMLB since at every time, a large fraction of users have the best-reward of $0$, i.e., the arm separation is $0$, while the best reward is always $1$ for CLUB.

\textbf{Hyper-parameters:} We use $\delta=0.3$, $\alpha = C = 0.5$ for CMLB. For CLUB, we use $\alpha = 1$ and $\alpha_2 = 2$ chosen from a burn-in period of $500$ arm-pulls of all agents.

\textbf{Results:} We compare the two algorithms by plotting the ratio of cumulative regret to that obtained by recommending an artist at random each time in Figure~\ref{fig:real_data}. We see that CMLB is competitive. However, the sparsity, renders the task quite challenging and our results indicate that neither algorithms are particularly appealing for this dataset.

\section{{\ttfamily ALB-Norm} from \citep{ghosh_adaptive}}
In this section, we reproduce {\ttfamily ALB-Norm} from \citep{ghosh_adaptive}, and prove a Corollary of the main theorem from \citep{ghosh_adaptive}.

\begin{algorithm}[t!]
  \caption{Adaptive Linear Bandit (norm)--{{\ttfamily ALB-Norm}}}
  \begin{algorithmic}[1]
 \STATE  \textbf{Input:} Initial exploration period $\tau$, the phase length $T_1 := \lceil \sqrt{T} \rceil$,  $\delta_1 > 0$, $\delta_s > 0$.
 \STATE Select an arm at random, sample $2\tau$ rewards
 \STATE Obtain initial estimate ($b_1$) of $\|\theta^*\|$ according to Section $3.3$ of \citep{ghosh_adaptive}.
 \FOR{$t = 1,2,\ldots,K$}
 \STATE Play arm $t$, receive reward $g_{t,t}$
 \ENDFOR
 \STATE Define $\mathcal{S} = \{g_{i,i}\}_{i=1}^K$ 
  \FOR{ epochs $i=1,2 \ldots, N $}
  \STATE Use $\mathcal{S}$ as pure-exploration reward
  \STATE Play OFUL$_{\delta_i}^+(b_{i})$ until the end of epoch $i$ (denoted by $\mathcal{E}_i$)
  \STATE At $t=\mathcal{E}_i$, refine estimate of $\|\theta^*\|$ as, $ b_{i+1} = \max_{\theta \in \mathcal{C}_{\mathcal{E}_i}} \|\theta\|$
  \STATE Set $T_{i+1} = 2 T_{i}$, $\delta_{i+1} = \frac{\delta_i}{2}$.
    \ENDFOR
    \STATE \underline{\texttt{OFUL$^+_\delta(b)$:}}
     \STATE  \textbf{Input:} Parameters $b$, $\delta >0$, number of rounds $\Tilde{T}$
 \FOR{$t = 1,2, \ldots, \Tilde{T} $}
 \STATE Select the best arm estimate as $
     j_t = \mathrm{argmax}_{i\in [K]} \left[ \max_{\theta \in \mathcal{C}_{t-1}} \{  \langle \alpha_{i,t}, \theta \rangle \} \right]$, \\
     where $\mathcal{C}_t$ is given in Section $3.2$ of \citep{ghosh_adaptive}.
 \STATE Play arm $j_t$, and update $\mathcal{C}_{t}$
 \ENDFOR
  \end{algorithmic}
  \label{algo:alb_norm}
\end{algorithm}

\begin{corollary}[Corollary of Theorem $1$ from \citep{ghosh_adaptive}]
The regret of Algorithm \ref{algo:alb_norm} at the end of $T$ time-steps satisfies with probability at-least $1- 18\delta_1 - \delta_s$, 
\begin{align*}
    R(T) \leq C\|\theta^*\|(\sqrt{K}+\sqrt{d})\sqrt{T}\log\left(\frac{KT}{\delta_1}\right),
\end{align*}
where $C$ is an universal constant.
\label{cor:alb_norm_improved}
\end{corollary}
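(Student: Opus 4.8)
The plan is to obtain the bound directly from Theorem~1 of \citep{ghosh_adaptive}, whose regret guarantee for \texttt{ALB-Norm} is naturally expressed in terms of the per-epoch norm estimates $b_i$ that drive the \texttt{OFUL}$^+$ subroutine, rather than in terms of the true norm $\|\theta^*\|$. The only additional content of the corollary is therefore to show that, on the high-probability event on which the confidence sets behave as designed, every $b_i$ is tightly controlled by $\|\theta^*\|$, so that the norm-dependence collapses to $\|\theta^*\|$ up to universal constants and logarithmic factors.

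First I would condition on the event $\mathcal{G} := \{\theta^* \in \mathcal{C}_t \text{ for all } t\}$, together with the event that the initial estimate $b_1$ produced as in Section~3.3 of \citep{ghosh_adaptive} is valid. A union bound over the $O(\log T)$ epochs, each contributing a confidence-set failure probability scaling with $\delta_i = \delta_1/2^{i-1}$, plus the estimation failure $\delta_s$, yields exactly the stated probability $1 - 18\delta_1 - \delta_s$. On $\mathcal{G}$, since $\theta^* \in \mathcal{C}_{\mathcal{E}_i}$, the refinement rule $b_{i+1} = \max_{\theta \in \mathcal{C}_{\mathcal{E}_i}} \|\theta\|$ gives the two-sided sandwich
\begin{align*}
\|\theta^*\| \;\leq\; b_{i+1} \;=\; \max_{\theta \in \mathcal{C}_{\mathcal{E}_i}} \|\theta\| \;\leq\; \|\theta^*\| + \mathrm{rad}(\mathcal{C}_{\mathcal{E}_i}),
\end{align*}
where the confidence radius $\mathrm{rad}(\mathcal{C}_{\mathcal{E}_i})$ decays like $1/\sqrt{T_i}$ up to the usual $\sqrt{d}\,\log$ and $\rho_{\min}$ factors. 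Hence after a constant number of epochs the radius falls below $\|\theta^*\|$ and $b_i \leq 2\|\theta^*\|$ for all subsequent epochs.

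Next I would invoke the per-epoch guarantee of \texttt{OFUL}$^+_{\delta_i}(b_i)$, which over an epoch of length $T_i$ contributes regret $\lesssim b_i(\sqrt{K}+\sqrt{d})\sqrt{T_i}\,\log(KT_i/\delta_i)$, and sum over the $O(\log T)$ epochs. Substituting $b_i \leq 2\|\theta^*\|$ for the late epochs and using the geometric schedule $T_i = 2^{i-1}T_1$ with $T_1 = \lceil\sqrt{T}\rceil$ and $\delta_i = \delta_1/2^{i-1}$, the series is dominated by its final term, in which $T_i = \Theta(T)$; the factors $\log(1/\delta_i) = \log(2^{i-1}/\delta_1)$ merely inflate the logarithm to $\log(KT/\delta_1)$. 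This reproduces the claimed $C\|\theta^*\|(\sqrt{K}+\sqrt{d})\sqrt{T}\log(KT/\delta_1)$.

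The main obstacle is controlling the early epochs, where $b_i$ may exceed $\|\theta^*\|$ by a factor that is not yet tight because the confidence set has not contracted. The key observation is that these epochs are short: their combined length is $O(\sqrt{T})$ up to a constant, so bounding their estimates crudely by $b_1$ (or by the trivial $\|\theta^*\| \leq 1$ from the setup assumption $\|\theta^*_l\|\leq 1$) contributes only a lower-order $\widetilde{O}(\sqrt{T})$ term that is absorbed into the final bound. Verifying this truncation is harmless — that the doubling schedule of $T_i$ and $\delta_i$ makes the tight late epochs dominate while the loose early epochs are negligible — is the one place where the geometric structure must be used with care.
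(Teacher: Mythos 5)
Your high-level strategy matches the paper's: establish $b_i \leq 2\|\theta^*\|$ on a high-probability event, plug this into the per-epoch \texttt{OFUL}$^+_{\delta_i}(b_i)$ regret, and sum the geometric schedule so the last epoch dominates. The gap is in your treatment of the early epochs. You model the refinement as $b_{i+1}\leq \|\theta^*\|+\mathrm{rad}(\mathcal{C}_{\mathcal{E}_i})$ with a radius decaying like $\sqrt{d}/\sqrt{T_i}$ \emph{independently of} $\|\theta^*\|$, conclude that the radius drops below $\|\theta^*\|$ after a ``constant number of epochs,'' and absorb the remaining epochs as a $\widetilde{O}(\sqrt{T})$ term. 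Neither step survives when $\|\theta^*\|$ is small: the number of epochs before a $\|\theta^*\|$-free radius falls below $\|\theta^*\|$ grows like $\log\bigl(d/(\|\theta^*\|^2\sqrt{T})\bigr)$, so it is not a constant, and, more importantly, an additive $(\sqrt{K}+\sqrt{d})\sqrt{T}$ term carrying no $\|\theta^*\|$ prefactor cannot be absorbed into the claimed bound $C\|\theta^*\|(\sqrt{K}+\sqrt{d})\sqrt{T}\log(KT/\delta_1)$, which vanishes as $\|\theta^*\|\to 0$. This is exactly the regime that matters: the corollary is invoked in the proof of Theorem~\ref{thm:personal} with $\theta^*$ replaced by $\theta^*_i-\widehat{\theta}^*$, whose norm is roughly $\epsilon_i$ and may be arbitrarily small, so an un-prefixed $\sqrt{T}$ term would destroy the $\epsilon_i\sqrt{dT}$ scaling that the corollary exists to deliver.

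The paper closes this hole via Lemma~\ref{lem:recompute}: starting from Equation $(8)$ of \citep{ghosh_adaptive}, the refinement error is proportional to the current estimate $b_i$ itself (the confidence radius of \texttt{OFUL}$^+(b_i)$ scales with the norm bound $b_i$ it is run with), giving the recursion
$b_{i+1} \leq \|\theta^*\| + \frac{C\sigma\sqrt{d}}{T^{1/4}}\log\bigl(\frac{K\sqrt{T}}{\delta_1}\bigr)\, b_i$.
Under the stated largeness condition on $T$ (which involves only $T$, $d$, $K$, $\delta_1$, not $\|\theta^*\|$), a simple induction anchored at the base case $b_1\leq 2\|\theta^*\|$ from the initial exploration yields $b_i\leq 2\|\theta^*\|$ for \emph{every} epoch, so no early-epoch truncation is needed and every term in the regret sum carries the $\|\theta^*\|$ prefactor. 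To repair your argument you would need this self-normalizing form of the radius rather than the $\|\theta^*\|$-independent one you assumed.
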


The proof follows by recomputing Lemma $1$ from \citep{ghosh_adaptive} as follows.
\begin{lemma}
If $T$ is sufficiently large such that $\frac{2C\sigma\sqrt{d}}{T^{\frac{1}{4}}}\log \left( \frac{K \sqrt{T}}{\delta_1}\right) \leq 1$, then with probability at-least $1-8\delta_1 - \delta_s$, for all $i$ large, $b_i \leq 2 \|\theta^* \|$ holds, where $b_i$ is defined in Line $11$ of Algorithm \ref{algo:alb_norm}.
\label{lem:recompute}
\end{lemma}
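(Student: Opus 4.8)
The plan is to show that the refined norm estimates $b_i$ in Line 11 decrease to $\|\theta^*\|$ from above at a controlled rate, so that they drop below $2\|\theta^*\|$ once $i$ is large. I would first condition on a single good event $\mathcal{G}$ of probability at least $1-8\delta_1-\delta_s$ built from three ingredients: \textbf{(a)} the initial estimate of Section 3.3 of \citep{ghosh_adaptive} is a valid upper bound, $\|\theta^*\| \leq b_1$ (this contributes the $\delta_s$ term); \textbf{(b)} in every epoch $i$ the \texttt{OFUL}$^+$ confidence set contains the truth, $\theta^* \in \mathcal{C}_{\mathcal{E}_i}$; and \textbf{(c)} a minimum-eigenvalue lower bound $\lambda_{\min}(V_{\mathcal{E}_i}) \geq \tfrac12\rho_{\min}\,|\mathcal{E}_i|$ for the design matrix $V_{\mathcal{E}_i}$, valid once the epoch length $|\mathcal{E}_i|$ exceeds $\tau_{\min}(\delta)$. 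Ingredient \textbf{(c)} follows from the matrix-Bernstein argument of \citep[Lemma 1]{clustering_online} applied to the stochastic context assumption \eqref{eqn:context} together with the conditional-variance bound, which is exactly what controls the covariance of the \emph{algorithm-selected} arms; since $|\mathcal{E}_i| \geq \lceil\sqrt{T}\rceil$ and $T$ is large, this holds for every epoch. A union bound over the (at most four) martingale and eigenvalue events per epoch, summed against the schedule $\delta_i = \delta_1 2^{-(i-1)}$ (so $\sum_i \delta_i = 2\delta_1$), accounts for the $8\delta_1$ term.

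On $\mathcal{G}$ the ellipsoidal geometry does the rest. Writing $\mathcal{C}_{\mathcal{E}_i} = \{\theta : \|\theta-\widehat{\theta}_{\mathcal{E}_i}\|_{V_{\mathcal{E}_i}} \leq w_{\mathcal{E}_i}\}$ for the Mahalanobis radius $w_{\mathcal{E}_i}$, both $\theta^*$ and the norm-maximizer lie in this set, so any $\theta \in \mathcal{C}_{\mathcal{E}_i}$ obeys $\|\theta-\theta^*\| \leq 2 w_{\mathcal{E}_i}/\sqrt{\lambda_{\min}(V_{\mathcal{E}_i})}$. Hence the update satisfies
\begin{align*}
\|\theta^*\| \;\leq\; b_{i+1} \;=\; \max_{\theta \in \mathcal{C}_{\mathcal{E}_i}} \|\theta\| \;\leq\; \|\theta^*\| + \frac{2\, w_{\mathcal{E}_i}}{\sqrt{\lambda_{\min}(V_{\mathcal{E}_i})}},
\end{align*}
where the left inequality is precisely $\theta^* \in \mathcal{C}_{\mathcal{E}_i}$. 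Inserting the standard \texttt{OFUL} width bound $w_{\mathcal{E}_i} \leq C\big(\sigma\sqrt{d\log(K\sqrt{T}/\delta_1)} + \sqrt{\lambda}\,b_i\big)$ and the eigenvalue bound from $\mathcal{G}$, the Euclidean radius is at most $C'\big(\sigma\sqrt{d\log(\cdot)} + \sqrt{\lambda}\,b_i\big)/\sqrt{\rho_{\min}\,|\mathcal{E}_i|}$. The doubling schedule $T_{i+1}=2T_i$ with $T_1 = \lceil\sqrt{T}\rceil$ gives $|\mathcal{E}_i| \geq \sqrt{T}\,2^{i-1}$, so this radius decays geometrically in $i$; the hypothesis $\tfrac{2C\sigma\sqrt{d}}{T^{1/4}}\log(K\sqrt{T}/\delta_1) \leq 1$ is exactly the statement that the noise part of the radius is already below $1$ at the first epoch and shrinks like $2^{-(i-1)/2}$ thereafter.

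The one genuine subtlety, which I expect to be the crux, is the circular dependence of $w_{\mathcal{E}_i}$ on the \emph{current} estimate $b_i$: the conclusion $b_i \leq 2\|\theta^*\|$ feeds back into the very width used to prove it. I would break this with the scalar recursion obtained above, namely $b_{i+1} \leq \|\theta^*\| + \mu_i + \eta_i b_i$, where $\mu_i = C'\sigma\sqrt{d\log(\cdot)}/\sqrt{\rho_{\min}|\mathcal{E}_i|}$ and $\eta_i = C'\sqrt{\lambda}/\sqrt{\rho_{\min}|\mathcal{E}_i|}$ both tend to $0$ geometrically. Once $\eta_i \leq \tfrac12$ the map is a contraction, so starting from the crude a priori bound $b_i \leq b_1+1$ (valid on $\mathcal{G}$ as soon as the radius falls below $1$) the iterates stay bounded and $\limsup_i b_i \leq \|\theta^*\|$; combined with $b_i \geq \|\theta^*\|$ this forces $b_i \to \|\theta^*\|$, hence $b_i \leq 2\|\theta^*\|$ for all large $i$. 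The degenerate case $\|\theta^*\|=0$ is handled directly, since there the radius bound alone gives $b_i \to 0$, and the stated claim holds vacuously for any target multiple.
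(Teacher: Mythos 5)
Your proposal is correct and ultimately rests on the same engine as the paper's proof: a scalar recursion of the form $b_{i+1} \leq \|\theta^*\| + \eta_i b_i + \mu_i$ with coefficients decaying geometrically in the epoch index, together with the hypothesis on $T$ guaranteeing the coefficient of $b_i$ is small. The difference is in how you obtain and how you close that recursion. The paper simply imports Equation $(8)$ of \citep{ghosh_adaptive} (after substituting $T_1 = \lceil \sqrt{T}\rceil$), which gives $b_{i+1} \leq \|\theta^*\| + ip\,b_i/(2^{\frac{i-1}{2}}T^{\frac{1}{4}}) + iq\sqrt{d}/(2^{\frac{i-1}{2}}T^{\frac{1}{4}})$ with probability $1-8\delta_1$, bounds $i/2^{\frac{i-1}{2}} \leq 2$, and then runs a short induction: the base case $b_1 \leq 2\|\theta^*\|$ comes from the initial exploration estimate (the $\delta_s$ event), and the hypothesis $\frac{2C\sigma\sqrt{d}}{T^{1/4}}\log(K\sqrt{T}/\delta_1)\leq 1$ is invoked once to keep $\|\theta^*\|(1+\cdot)$ below $2\|\theta^*\|$; this yields the bound for \emph{all} $i$. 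You instead reconstruct the recursion from first principles (confidence-ellipsoid geometry plus the minimum-eigenvalue bound on the design matrix, with a probability budget $4\sum_i \delta_1 2^{-(i-1)} = 8\delta_1$ plus $\delta_s$ that matches the stated failure probability), and close it with a contraction/limsup argument, $\limsup_i b_i \leq \|\theta^*\|$ combined with $b_i \geq \|\theta^*\|$ on the good event. That delivers the conclusion only for large $i$, which is exactly what the lemma claims, though weaker than the uniform-in-$i$ bound the induction provides; in exchange, your route is more self-contained, does not lean on the base case $b_1 \leq 2\|\theta^*\|$ (only on boundedness of the iterates), and makes explicit why the stated threshold on $T$ is the right one. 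The one soft spot is the degenerate case $\|\theta^*\| = 0$, where $b_i \to 0$ does not give $b_i \leq 2\|\theta^*\| = 0$ at any finite $i$; the paper's induction has the same blind spot (its base case fails there), so this is not a gap relative to the paper's own argument.
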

\begin{proof}[Proof of Lemma \ref{lem:recompute}]
We start with Equation $(8)$ of \citep{ghosh_adaptive}. Reproducing Equation $(8)$ by substituting $T_1 = \lceil \sqrt{T} \rceil$, with probability at-least $1-8\delta_1$, for all phases $i \geq 2$,
\begin{align}
   b_{i+1} \leq \|\theta^*\| + ip\frac{b_i}{2^{\frac{i-1}{2}}T^{\frac{1}{4}}} + iq\frac{\sqrt{d}}{2^{\frac{i-1}{2}}T^{\frac{1}{4}}},
   \label{eqn:base_recursion}
\end{align}
holds, where $p$ and $q$ are defined in \citep{ghosh_adaptive} as 
\begin{align*}
    p &= \left( \frac{14\log \left( \frac{2K\sqrt{T}}{\delta_1} \right)}{\sqrt{\rho_{min}}} \right),\\
    q &= \left( \frac{2C\sigma \log \left( \frac{2K\sqrt{T}}{\delta_1} \right)}{\sqrt{\rho_{min}}} \right).
\end{align*}

For all $i \geq 2$, $\frac{i}{2^{\frac{i-1}{2}}} \leq 2$. Thus, for all $i \geq 1$, Equation (\ref{eqn:base_recursion}) can be rewritten as 
\begin{align}
    b_{i+1} &\leq \|\theta^* \| + \frac{pb_i}{T^{\frac{1}{4}}} + \frac{q \sqrt{d}}{T^{\frac{1}{4}}},\nonumber \\
    &\leq \|\theta^*\| + \frac{C\sigma\sqrt{d}}{T^{\frac{1}{4}}}\log \left( \frac{K \sqrt{T}}{\delta_1}\right)b_i.
       \label{eqn:base_recursion_2}
\end{align}
where $b_1 := 1$. We set this initial estimate as $1$, since $\max_{i \in \{1,\cdots, N\}}\|\theta^*_i\| \leq 1$. We prove the lemma by induction that $b_i \leq 2\|\theta^*\|$. 
\\

\textbf{Base case, $i=1$} - We know from the initialization (Line $3$ of Algorithm \ref{algo:alb_norm}), that with probability at-least $1-\delta_s$,
\begin{align*}
    b_1 &\leq \|\theta^*\| + \sqrt{2}\sigma \sqrt{\frac{d}{\tau}\log \left( \frac{1}{\delta_s}\right)}, \\
    &\leq 2 \|\theta^*\|.
\end{align*}
where $\tau$ and $\delta_s$ are defined in Line $2$ and input respectively of Algorithm \ref{algo:alb_norm}. 
\\

\textbf{Induction Step} - Assume that for some $i \geq 1$, for all $1 \leq j \leq i$, $b_j \leq 2 \|\theta^*\|$. Now, consider case $i+1$. From recursion in Equation (\ref{eqn:base_recursion_2}), that
\begin{align*}
    b_{i+1} &\leq \|\theta^* \| +\frac{C\sigma\sqrt{d}}{T^{\frac{1}{4}}}\log \left( \frac{K \sqrt{T}}{\delta_1}\right)b_i,\\
    &\stackrel{(a)}{\leq} \|\theta^*\|\left( 1 + \frac{2C\sigma\sqrt{d}}{T^{\frac{1}{4}}}\log \left( \frac{K \sqrt{T}}{\delta_1}\right) \right), \\
    &\stackrel{(b)}{\leq} 2 \|\theta^*\|.
\end{align*}
Step $(a)$ follows from the induction hypothesis. Step $(b)$ follows from the fact that $T$ is large enough such that $\frac{2C\sigma\sqrt{d}}{T^{\frac{1}{4}}}\log \left( \frac{K \sqrt{T}}{\delta_1}\right) \leq 1$. This concludes the proof of Lemma.
\end{proof}

\section{Proof of Lemma~\ref{lem:separated_cluster}}
Here, there is a gap between the optimal parameters. In this case, suppose the Individual Learning phase lasts for $C^{(2)} \frac{d (NT)^{2\alpha}}{\rho_{\min}} \log(1/\delta)$ time steps. Following the analysis of OFUL \citep{oful,chatterji2020osom}, alonf=g with the condition in equation~\ref{eqn:tau}, after $t$ instances, we have
\begin{align*}
    \|\Hat{\theta}_t^{(i)} -\theta^*_1\| \leq \frac{D}{\sqrt{1+\rho_{\min}t/2}},
\end{align*}
with probability at least $1-\delta$, where $D = \Tilde{\mathcal{O}}(\sqrt{d})\log(1/\delta)$.

If agents $i$ and $j$ fall in same cluster, we have,
\begin{align*}
    \|\Hat{\theta}^{(j)} - \Hat{\theta}^{(i)}\| \leq 2/(NT)^\alpha.
\end{align*}
with probability at least $1-2\delta$.

Otherwise we have
\begin{align*}
    \|\Hat{\theta}^{(j)} - \Hat{\theta}^{(i)}\| \geq \Delta_i - 2/(NT)^\alpha.
    \end{align*}
Now, suppose $3/(NT)^\alpha \leq \Delta_i - 2/(NT)^\alpha$, or in other words, $\Delta_i \geq 5/(NT)^\alpha$. In that case,
\begin{align*}
    \|\Hat{\theta}^{(j)} - \Hat{\theta}^{(i)}\| \geq  3/(NT)^\alpha.
    \end{align*}
    with high probability. So, if we threshold $ 3/(NT)^\alpha$, we can find out the cluster perfectly with probability exceeding $1-2\delta$. Since we want this to hold for every pair of agents, a simple union bound yields the lemma.
    
Since, there is no clustering error, we have
\begin{align*}
    R(T) = R(ind-learn) + R(coll-learn)
\end{align*}
The Individual Learning phase continues until  $C^{(2)} \frac{d (NT)^{2\alpha}}{\rho_{\min}} \log(1/\delta)$ time steps. Hence, according to \citep{chatterji2020osom}, we have
\begin{align*}
    R(ind-learn) &\leq C \sqrt{\frac{d}{\rho_{\min}}} \left(\sqrt{C^{(2)} \frac{d (NT)^{2\alpha}}{\rho_{\min}} \log(1/\delta)} \right) \sqrt{\log(1/\delta)} \\
    &\leq C_1 \,\, \frac{d}{\rho_{\min}}  (NT)^\alpha \,\, \log(1/\delta)
\end{align*}
with probability greater than $1-\delta$. To avoid clutter, we have only considered the leading term in the above regret.

We now characterize the regret in the collaborative learning phase. Here, the regret depends on the cluster size. Since the center averages the mean reward from all the users in a cluster, it effectively reduces the noise variance by a factor of the cluster size. Hence, the regret upper bound is we have (using \citep{chatterji2020osom}),
\begin{align*}
R(coll-learn) \leq C_1 \sqrt{\frac{d}{\rho_{\min}} } \,\, \left(\sqrt{\frac{T-\frac{d (NT)^{2\alpha}}{\rho_{\min}} \log(1/\delta)}{p_i N}} \right) \sqrt{\log(1/\delta)}
\end{align*}
with probability at least $1-\delta$. Since, the size of the $i$-th cluster is $p_i$.

Hence, with probability at least $1-2\delta$ total regret is given by
\begin{align*}
    R(T) &\leq C_1  \left[\frac{d}{\rho_{\min}}  (NT)^\alpha + \sqrt{\frac{d}{\rho_{\min}} } \,\, \left(\sqrt{\frac{T-\frac{d (NT)^{2\alpha}}{\rho_{\min}} \log(1/\delta)}{p_i N}} \right) \right] \log(1/\delta).
     \end{align*}
Suppose $\alpha$ satisfies
\begin{align*}
    \alpha \leq \frac{1}{2} \left( \frac{\log\left[\frac{\rho_{\min}T}{d p_i N}\right]}{\log(NT)}\right).
\end{align*}
Then, the first term in the above regret expression can be upper bounded by the second term, and the resulting regret is given by
\begin{align*}
    R(T) \leq C \left(\sqrt{\frac{d}{\rho_{\min}}} \,\, \sqrt{\frac{T}{p_i N}} \right) \log(1/\delta)
\end{align*}
with probability at least $1-2\delta$.

\section{Proof of Lemma~\ref{lem:inseparable_clusters}}
In this case, we have $\Delta_i \leq 5/(NT)^\alpha$. In this case, we show that the maximal-cluster subroutine of Algorithm~\ref{alg:CMLB} treats the neighboring clusters of cluster $i$, also together with $i$, as a single cluster, with high probability. It may happen that some of the clusters are left out owing to being far from cluster $i$. Let $\mathcal{S}$ be the set of cluster indices that Algorithm~\ref{alg:CMLB} clubs with cluster $i$. It is easy to see $\max_{j \in \mathcal{S}}\|\theta^*_i - \theta^*_j\| \leq 5L/(NT)^\alpha$; otherwise we will be in separable cluster setting.

Note that in this case, we have no high probability guarantees on the cluster assignment by Algorithm~\ref{alg:CMLB}. However, in this situation also, we argue that the regret suffered by the users are not very large. This is because the maximum separation between the clusters (and hence the clustering error) is $\mathcal{O}(L/(NT)^\alpha)$, which is quite small.

Let us now focus on the regret upper-bound. The regret is given by
\begin{align*}
    R(T) = R(ind-learn) + R(coll-learn) 
    + R(cluster-error)
\end{align*}
The first term comes from the initial phase of our algorithm. The second term comes after the (one-phase) clustering, and thereby exploiting the clustered OFUL algorithm. The third term comes when the algorithm makes an error in parameter estimates. Here Algorithm~\ref{alg:CMLB} clubs several clusters with cluster $i$, and hence one needs to address the clustering error. This clustering error indeed accumulates over the rest of the play.

The regret in the individual learning follows  analysis similar to Lemma~\ref{lem:separated_cluster}. We obtain

\begin{align*}
    R(ind-learn) &\leq C \sqrt{\frac{d}{\rho_{\min}}} \left(\sqrt{C^{(2)} \frac{d (NT)^{2\alpha}}{\rho_{\min}} \log(1/\delta)} \right) \sqrt{\log(1/\delta)} \\
    &\leq C_1 \,\, \frac{d}{\rho_{\min}}  (NT)^\alpha \,\, \log(1/\delta)
\end{align*}
with probability greater than $1-\delta$.

Let us now consider the collaborative learning phase. In this case, the maximal-cluster subroutine of CMLB treats the neighboring clusters of cluster $i$, also together with $i$, as a single cluster, with high probability. It may happen that some of the clusters are left out owing to being far from cluster $i$. Let $\mathcal{S}$ be the set of cluster indices that Algorithm~\ref{alg:CMLB} clubs with cluster $i$. It is easy to see $\max_{j \in \mathcal{S}}\|\theta^*_i - \theta^*_j\| \leq 5L/(NT)^\alpha$; otherwise we will be in Case I (separable clusters). 

Note that in this case, we have no high probability guarantees on the cluster assignment by  CMLB. However, in this situation also, the regret suffered by the users are not very large. This is because the maximum separation between the clusters (and hence the clustering error) is  $\max_{j \in \mathcal{S}} \| \theta^*_i - \theta^*_j \| \leq 5/(NT)^\alpha $. Furthermore, since we have no control on how many clusters CMLB club, in the worst case, the minimum cluster size will be $p^*N$ (the input size parameter to the CMLB subroutine). Hence, we obtain
\begin{align*}
R(coll-learn) \leq C \sqrt{\frac{d}{\rho_{\min}} } \,\, \left(\sqrt{\frac{T-\frac{d (NT)^{2\alpha}}{\rho_{\min}} \log(1/\delta)}{p^* N}} \right) \sqrt{\log(1/\delta)}
\end{align*}
with probability at least $1-\delta$.

We now characterize the regret from the cluster miss-specification. This term occurs since $\Delta \leq 5/(NT)^\alpha$. For agent $i$, from the OFUL algorithm (see \citep{chatterji2020osom,oful}, we see that the regret in linearly dependent on $\theta^*_1$.

Hence, following the regret analysis of stochastic linear bandits (see \citep{chatterji2020osom}, using triangle inequality, and the condition $\max_{i \neq j} \|\theta^*_i - \theta^*_j \| \leq 5L/(NT)^\alpha$), we have
\begin{align*}
    R(cluster-error) \leq C L \left(  \frac{T - \frac{d (NT)^{2\alpha}}{\rho_{\min}} \log(1/\delta)}{(NT)^{\alpha}}\right).
\end{align*}

Now, combining all $3$ components, we have
\begin{align*}
    R(T) &\leq C_1 \,\, \frac{d}{\rho_{\min}}  (NT)^\alpha \,\, \log(1/\delta) + C_2 \sqrt{\frac{d}{\rho_{\min}} } \,\, \left(\sqrt{\frac{T-\frac{d (NT)^{2\alpha}}{\rho_{\min}} \log(1/\delta)}{p^* N}} \right) \sqrt{\log(1/\delta)} \\
    & + C_3 L \left(  \frac{T - \frac{d (NT)^{2\alpha}}{\rho_{\min}} \log(1/\delta)}{(NT)^{\alpha}}\right).
\end{align*}
Rewriting, we have
\begin{align*}
    R(T) &\leq C_2 \sqrt{\frac{d}{\rho_{\min}} } \,\, \left(\sqrt{\frac{T-\frac{d (NT)^{2\alpha}}{\rho_{\min}} \log(1/\delta)}{p^* N}} \right) \sqrt{\log(1/\delta)} \\
    &+ C_3 L \left(\frac{T}{(NT)^\alpha}\right) + (C_1 - C_3 L) \left(  \frac{T - \frac{d (NT)^{2\alpha}}{\rho_{\min}} \log(1/\delta)}{(NT)^{\alpha}}\right).
\end{align*}
Since $L\geq 1$, choosing $C_3 > C_1$, we obtain
\begin{align*}
    R(T) \leq C_2 \sqrt{\frac{d}{\rho_{\min}} } \,\, \left(\sqrt{\frac{T-\frac{d (NT)^{2\alpha}}{\rho_{\min}} \log(1/\delta)}{p^* N}} \right) \sqrt{\log(1/\delta)}
    + C_3 \left(\frac{T}{(NT)^\alpha}\right).
    \end{align*}
Now, suppose $\alpha = \frac{1}{2}- \varepsilon$, where $\varepsilon$ is a positive constant arbitrarily close to $0$. In that case, we obtain
\begin{align*}
    R(T) &\leq C \left[ L \left( \frac{T^{\frac{1}{2}+\varepsilon}}{N^{1-\varepsilon}} \right) + \sqrt{\frac{d}{\rho_{\min}} } \,\, \left(\sqrt{\frac{T-\frac{d (NT)^{2\alpha}}{\rho_{\min}} \log(1/\delta)}{p^* N}} \right) \sqrt{\log(1/\delta)} \right] \\
    & \leq C \left[ L \left( \frac{T^{\frac{1}{2}+\varepsilon}}{N^{1-\varepsilon}} \right) + \sqrt{\frac{d}{\rho_{\min}} } \,\, \left(\sqrt{\frac{T}{p^* N}} \right) \sqrt{\log(1/\delta)} \right]
\end{align*}
with probability at least $1- 4 \binom{N}{2} \delta$.

\section{Special Case: all clusters are close}
\label{sec:all_club}
 Let us consider the pairwise differences this setting where we treat all the agents as one big cluster. Without loss of generality, we focus on $\{\|\Hat{\theta}^{(i)} - \Hat{\theta}^{(j)}\|\}_{j\neq i}$, and assume that the agent  belongs to cluster 1 with parameter $\theta^*_1$. In this setting, if the $j$-th agent falls in cluster 1, we have
\begin{align*}
    \|\Hat{\theta}^{(j)} - \Hat{\theta}^{(i)}\| \leq 2/(NT)^\alpha.
\end{align*}
with probability at least $1-2\delta$. Otherwise, we obtain
\begin{align*}
    \|\Hat{\theta}^{(j)} - \Hat{\theta}^{(i)}\| &\leq 2/(NT)^\alpha + \max_{i,j}\|\theta^*_i - \theta^*_j \| \\
    & \leq 3/(NT)^\alpha
    \end{align*}
with probability exceeding $1-2\delta$. 
Ignoring the constants for now, if we have
\begin{align*}
    \max_{j\neq i}\{\|\Hat{\theta}^{(i)} - \Hat{\theta}^{(j)}\|\} \leq 3/(NT)^\alpha,
\end{align*}
then with probability at least $1-2 \binom{N}{2} \delta$, and everyone belongs to the same cluster. So, we can put the threshold as $3/(NT)^\alpha$ to identify whether there is a cluster structure present or not.

The regret computation in this setup follows from Lemma~\ref{lem:inseparable_clusters}, with $2$ differences: \\
(a) The clustering error is $\max_{i,j}\|\theta^*_i - \theta^*_j \| 
     \leq 1/(NT)^\alpha$. Note that order-wise it is same as the misclustering error for Lemma~\ref{lem:inseparable_clusters}.\\
(b) Here, since all the clusters are close and CMLB puts everyone in the same cluster, the collaborative learning gain will be $\sqrt{\frac{1}{N}}$. Hence, the regret bound follows from Theorem~\ref{thm:inseparable} with these modifications.

\section{Analysis of SCLB in Algorithm \ref{algo:size_unknown}}
\textbf{Proof of Theorem \ref{thm:sclb_separated}: Minimum Cluster size is larger than $\frac{5}{(NT)^{\alpha}}$}

We give the proof in the case when $\Delta_i > \frac{5}{(NT)^{\alpha}}$. The other setting follows identically.
In each phase $i$, the size parameter used in Line $3$ of Algorithm \ref{algo:size_unknown} is $i^{-2}$. Thus for all phases $ i\geq \lceil \frac{1}{\sqrt{p}} \rceil$, the input size parameter to Algorithm \ref{algo:size_unknown} invoked in Line $3$ of Algorithm \ref{algo:size_unknown} is correct, i.e., $i^{-2} \leq p$, and thus satisfies the conditions of Lemma \ref{lem:separated_cluster}. 

In the rest of the proof, denote by $i^{*} := \lceil \frac{1}{\sqrt{p}} \rceil$ and by $T_i = 2^i$, for all $i \geq 1$. Lemma \ref{lem:inseparable_clusters} states that, for any  $ i\geq i^{*}$, the regret incurred by any agent in phase $i$ satisfies
\begin{align}
        R(T_i) &\leq C_1  \left[\frac{d}{\rho_{\min}}  (NT_i)^\alpha + \sqrt{\frac{d}{\rho_{\min}} } \,\, \left(\sqrt{\frac{T_i-\frac{d (NT_i)^{2\alpha}}{\rho_{\min}} \log(2^i/\delta)}{i^{-2} N}} \right) \right] \log(2^i/\delta),
        \label{eqn:prf_unknwn_phase}
\end{align}
with probability at-least $1 - 4 {N \choose 2}2^{-i} \delta$. We now use a simple regret decomposition and an union bound to conclude the proof of Theorem \ref{thm:sclb_separated}. 

Observe that, in a time horizon of $T$, there are at-most $\lceil \log_2(T) \rceil$ number of phases. The total regret can be decomposed as 
\begin{align*}
    R(T) &\leq \sum_{i=1}^{\lceil \log_2(T) \rceil}R(T_i),\\
    &\leq \sum_{i=1}^{i^{*}}2^i + \sum_{i=i^{*}}^{\lceil \log_2(T) \rceil}R(T_i),\\
    &\leq 2^{i^{*}+1} + \sum_{i=i^{*}}^{\lceil \log_2(T) \rceil}R(T_i).
\end{align*}
In the first equality, we upper bound by assuming that the agent incurs a regret of $1$, in all time steps till phase $i^{*}$. Now from an union bound, we can conclude that with probability at-least $1 - \sum_{i=1}^{\lceil \log_2(T) \rceil}cN^22^{-i}\delta \geq 1-2cN^2\delta$, Equation (\ref{eqn:prf_unknwn_phase}) is satisfied for all $i \geq i^{*}$. 

Combining the above facts, along with the definition that $T_i = 2^i$, we have, 
\begin{align*}
    R(T_i) &\leq 2^{i^{*}+1} +  C \log \left( \frac{2}{\delta} \right) \bigg[  \sum_{i=1}^{\lceil \log_2(T) \rceil} \bigg(  \frac{dN^{\alpha}}{\rho_{min}} i2^{i\alpha} + \sqrt{\frac{d}{\rho_{min}{N}}} i^2\sqrt{2^{i}} \bigg) \bigg],\\
    & \leq 2^{i^{*}+1} +  C_1 \log \left( \frac{2}{\delta} \right) \log^2(T) \bigg[  \sum_{i=1}^{\lceil \log_2(T) \rceil} \bigg(  \frac{dN^{\alpha}}{\rho_{min}} 2^{i\alpha} + \sqrt{\frac{d}{\rho_{min}{N}}} \sqrt{2^{i}} \bigg) \bigg]
\end{align*}
with probability at-least $1-2cN^2\delta$. The second inequality follows by upper bounding $i \leq \lceil \log_2(T) \rceil$. Observe from the definition of $i^{*}$ that $2^{i^{*}+1} \leq 4\left(2^{\frac{1}{\sqrt{p}}} \right)$, the regret is bounded by 
\begin{align*}
    R(T) \leq 4\left(2^{\frac{1}{\sqrt{p}}} \right) +  C_2 \log \left( \frac{2}{\delta} \right) \log^2(T) \bigg[  (NT)^{\alpha}  \frac{d}{\rho_{min}}  + \sqrt{T\frac{d}{\rho_{min}{N}}} \bigg].
\end{align*}
Here $C_1, C_2$ are universal constants. 

\textbf{Theorem \ref{thm:inseparable}: Minimum Cluster size is smaller than or equal to $\frac{5}{(NT)^{\alpha}}$}

Following the identical steps as for Case $I$, where we use Lemma \ref{lem:inseparable_clusters} to bound the regret in a phase, we get that with probability at-least $1-2cN^2 \delta$
\begin{align*}
    R(T) \leq C_1L \log^2(T) 2^{1-\alpha}\frac{T^{1-\alpha}}{N^{\alpha}} + C_2 \log^2(T) \sqrt{T\frac{d}{N\rho_{min}}}\log(2/\delta).
\end{align*}

\section{Proof of Theorem~\ref{thm:personal}}
\paragraph{Collaboratively learn the common representations:}
Recall the setup of collaborative learning; at each time $t$, out of $K$ contexts available at the center, $\{\beta_{r,t}\}_{r=1}^K$, the center chooses a context vector, (call it $\beta_{r,t}$, corresponding to the $r$-th arm), and broadcasts to all the agents. Agent $i$, using the context $\beta_{r,t}$, observes the following reward:
\begin{align*}
    y^{(i)}_t = \langle \beta_{r,t},\theta^*_i \rangle + \eta_{i,t},
\end{align*}
and sends this to the center. Similarly, all the $N$ agents observes their reward and send those to the center. The center then averages this rewards and obtain
\begin{align*}
    \frac{1}{N} \sum_{l=1}^N y_t^{(l)} = \langle \beta_{r,t}, \frac{1}{N} \sum_{l=1}^N \theta^*_l \rangle + \frac{1}{N}\sum_{l=1}^N \eta_{l,t}.
\end{align*} 

Since, we are averaging i.i.d noise, the variance decreases by a factor of $N$. Now, based on the average reward, the center choosing the next arm by playing the stochastic contextual Bandit algorithm OFUL. Hence, in this phase, the center indeed learns the parameter $\Bar{\theta}^*:= \frac{1}{N} \sum_{l=1}^N \theta^*_l$. We let this phase run for $\sqrt{T}$ rounds, and let $\Hat{\theta}^*$ be the corresponding estimate. Provided, $T > \tau_{\min}^2(\delta)$, from \citep{chatterji2020osom}, we have,
\begin{align*}
    \|\Hat{\theta}^* - \Bar{\theta}^*\| \leq \Tilde{O}\left( \sqrt{\frac{d}{\rho_{\min} N \sqrt{T}}} \right) \log(1/\delta),
\end{align*}
with probability at least $1-\delta$. The corresponding regret (call it $R_{c,1}$) is
\begin{align*}
    R_{c,1} = \Tilde{O}\left( \sqrt{\frac{d \sqrt{T}}{\rho_{\min} N}} \right) \log(1/\delta),
\end{align*}
with probability at least $1-\delta$.

Note that additional to the above, we incur a regret since instead of learning $\theta^*_i$, we are actually learning $\Bar{\theta}^*$. This is equivalent to clustering with miss-specification. Following the proofs similar to Section~\ref{sec:clustering}, we obtain
\begin{align*}
    R_{c,2} = \|\theta^*_i - \Bar{\theta}^*\| \, T_1 \leq \epsilon_i\, \sqrt{T},
\end{align*}
where we use the fact that $\|\theta^*_l\| \leq 1$ for all $l \in [N]$. Hence, the total regret in this phase is
\begin{align*}
    R_{c,1} + R_{c,2} \leq  \Tilde{O}\left( \sqrt{\frac{d \sqrt{T}}{\rho_{\min} N}} \right) \log(1/\delta) + \epsilon_i\, \sqrt{T},
\end{align*}
with probability at least $1-\delta$.

\paragraph{Personal Learning:}

At each time $t$, out of $K$ contexts available at the center, $\{\beta_{r,t}\}_{r=1}^K$, suppose the center chooses a context vector, $\beta_{r,t}$, (corresponding to the $r$-th arm) and recommends it to agent $i$. Thereafter, agent $i$ generates the reward $y_t^{(i)} = \langle \beta_{r,t},\theta^*_i \rangle + \xi_{i,t}$, and sends it to the center. Subsequently, the center calculates the corrected reward
\begin{align*}
    \Tilde{y}_t^{(i)} = y_t^{(i)} - \langle \beta_{r,t},\Hat{\theta}^* \rangle.
\end{align*}
Note that the center has the information about $(\beta_{r,t},\Hat{\theta}^*)$ and so it can compute $\Tilde{y}_t^{(i)}$. With this shift, the center basically learns the vector $\theta^*_i - \Hat{\theta}^*$. 

In this phase we use the \texttt{ALB-norm} algorithm of \citep{ghosh_adaptive}\footnote{In particular we use  \texttt{ALB-norm} algorithm of \citep{ghosh_adaptive}, with $\tau = \mathcal{O}(1)$, and zero arm biases, and hence no pure exploration to estimate the arm biases.}. Note that the \texttt{ALB-norm} algorithm is a norm adaptive algorithm, which is particularly useful when the parameter norm is small. \texttt{ALB-norm} uses the OFUL algorithm of \citep{chatterji2020osom,oful} repeatedly over epochs. At the beginning of each epoch, it estimates the parameter norm, and runs OFUL with the norm estimate (see \citep[Algorithm 1]{ghosh_adaptive}). Hence, it is shown in \citep[Algorithm 1]{ghosh_adaptive} that while estimating the parameter $\Psi^*$, with high probability, the regret of \texttt{ALB-norm} is
\begin{align*}
    R_{\texttt{ALB-norm}} \leq \|\Psi^*\| \,\, R_{OFUL}.
\end{align*}

In Appendix~\ref{sec:shift-oful}, we present an analysis of shifted OFUL. In particular we show that shifts (by a fixed vector) can not reduce the regret (which is intuitive). Note that we learn $\Hat{\theta}^*$ in the common learning phase, and fix it throughout the personal learning phase. Hence, conditioned on the observations of the common learning phase, $\Hat{\theta}^*$ is a fixed (deterministic) vector. Also,
\begin{align*}
    \|\theta^*_i - \Hat{\theta}^*\| &\leq \|\theta^*_i - \Bar{\theta}^*\| + \| \Hat{\theta}^* - \Bar{\theta}^*\| \\
    & \leq \epsilon_i + \Tilde{O}\left( \sqrt{\frac{d}{\rho_{\min} N \sqrt{T}}} \right) \log(1/\delta),
\end{align*}
with probability at least $1-\delta$. Hence, using Lemma~\ref{lem:shift-oful} of Appendix~\ref{sec:shift-oful}, the regret in the personal learning phase (call it $R_i^{(p)}$) is given by
\begin{align*}
    R_i^{(p)}  \leq \Tilde{\mathcal{O}}\left( \|\theta^*_i - \Hat{\theta}^*\| \sqrt{d (T - \sqrt{T})} \log(1/\delta) \right),
\end{align*}
with probability at least $1- c\delta - \frac{1}{\poly(T)}$, provided $d \geq C \log(K^2 T)$. Substituting, we obtain
\begin{align*}
    R_i^{(p)} &\leq \Tilde{\mathcal{O}} \left ( \left[\epsilon_i + \Tilde{O}\left( \sqrt{\frac{d}{\rho_{\min} N \sqrt{T}}} \right) \log(1/\delta) \right] \sqrt{d (T - \sqrt{T})} \log(1/\delta) \right) \\
    & \leq \Tilde{\mathcal{O}} \left ( \left[\epsilon_i +  \sqrt{\frac{d}{\rho_{\min} N \sqrt{T}}}  \right] \sqrt{d T}  \right) \log^2(1/\delta),
\end{align*}
with probability exceeding $1-c\delta- \frac{1}{\poly(T)}$.

\paragraph{Total Regret:} 
We now characterize the total regret of agent $i$. We have
\begin{align*}
   & R_i(T) = R_i^{(c,1)} + R_i^{(c,2)} + R_i^{(p)} \\
   &\leq \Tilde{O}\left( \sqrt{\frac{d \sqrt{T}}{\rho_{\min} N}} \right) \log(1/\delta) + \epsilon_i\, \sqrt{T} + \Tilde{\mathcal{O}} \left ( \left[\epsilon_i +  \sqrt{\frac{d}{\rho_{\min} N \sqrt{T}}}  \right] \sqrt{d T}  \right) \log^2(1/\delta) \\
   & \leq \Tilde{\mathcal{O}} \left[ \left(\epsilon_i\, \sqrt{T} + \epsilon_i\, \sqrt{dT} \right) +  T^{1/4} \left(\sqrt{\frac{d}{\rho_{\min} N}} + \sqrt{\frac{d^2}{\rho_{\min} N}} \right) \right] \log^2(1/\delta) \\
   & \leq \Tilde{\mathcal{O}} \left[ \epsilon_i\, \sqrt{dT}  + \,\, T^{1/4}\,\, \sqrt{\frac{d^2}{\rho_{\min} N}}\right] \log^2(1/\delta)
\end{align*}
with probability at least $1-c_1 \delta - \frac{1}{\poly(T)}$.

\section{Proof of Corollary~\ref{cor:exp-regret}}
In order to obtain the expected regret, one writes expectation as an integral of the tail probabilities and use the high probability bound to compute the tail probability. With this, in the common learning phase, we have
\begin{align*}
    \E R^{(c,1)}_i \leq \Tilde{O}\left( \sqrt{\frac{d \sqrt{T}}{\rho_{\min} N}} \right),
\end{align*}
and 
\begin{align*}
    \E R^{(c,2)}_i  \leq \epsilon_i\, \sqrt{T}.
\end{align*}
In the personal learning phase we use Corollary~\ref{cor:zero-mean} of Appendix~\ref{sec:shift-oful}, which says that shifting makes the regret worse in expectation. Hence, using \citep[Theorem 1]{ghosh_adaptive} and converting it to an expected regret, we have
\begin{align*}
    \E R_i^{(p)} \leq \Tilde{\mathcal{O}} \left ( \left[\epsilon_i +  \sqrt{\frac{d}{\rho_{\min} N \sqrt{T}}}  \right] \sqrt{d T}  \right).
\end{align*}
The final regret bound follows from summing up the above 3 expressions.

\section{Shifted OFUL Regret}
\label{sec:shift-oful}

In this section, we want to establish a relationship between the regret of the standard OFUL algorithm and the shift compensated algorithm. We define the shifted version of OFUL below.

\begin{definition}
The OFUL algortihm is used to make a decision of which action to take at time-step $t$, given the history of past actions $X_1, \cdots, X_{t-1}$ and observed rewards $Y_1, \cdots, Y_{t-1}$. The $\Gamma$ shifted OFUL is an algorithm identical to OFUL that describes the action to take at time step $t$, based on the past actions $X_1, \cdots, X_{t-1}$ and the observed rewards $\widetilde{Y}_1^{(\Gamma)}, \cdots, \widetilde{Y}_{t-1}^{(\Gamma)}$, where for all $1 \leq s \leq t-1$, $\widetilde{Y}_s = Y_s - \langle X_s, \Gamma \rangle$. 
\end{definition}

\begin{definition}
For a linear bandit instance with unknown parameter $\theta^*$, and a sequence of (possibly random) actions $X_{1:T} := X_1, \cdots, X_T$, denote by $\mathcal{R}_T(X_{1:T}) := \sum_{t=1}^T  \max_{1 \leq j \leq K} \langle \beta_{j,t} - X_t, \theta^* \rangle$.
\end{definition}

\begin{definition}
For a linear bandit system with unknown parameter $\theta^*$, and a sequence of (possibly random) actions $X_{1:T}:=X_1, \cdots, X_T$, denote by $\mathcal{R}_{T}^{(\Gamma)}(X_{1:T}) := \sum_{t=1}^T \max_{1 \leq j \leq K} \langle \beta_{j,t} - X_t, \theta^* - \Gamma \rangle$.
\end{definition}

\begin{proposition}
Suppose for a linear bandit instance with parameter $\theta^*$, an algorithm plays the sequence of actions $X_1, \cdots, X_T$, then
\begin{align*}
    \mathcal{R}_T(X_{1:T}) \leq \mathcal{R}_{T}^{(\Gamma)}(X_{1:T}) + \sum_{t=1}^T \left( \langle X_t -  \argmax_{\beta \in \{ \beta_{1,t}, \cdots, \beta_{K,t} \}} \langle \beta , \theta^* \rangle, \Gamma \rangle \right).
\end{align*}
\label{prop:shift_regret}
\end{proposition}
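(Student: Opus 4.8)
The plan is to prove the bound round-by-round: since $\mathcal{R}_T(X_{1:T})$, $\mathcal{R}_T^{(\Gamma)}(X_{1:T})$, and the correction are all sums over $t=1,\dots,T$, it suffices to establish the inequality for a single $t$ and then add. Write $\beta_t^* := \argmax_{\beta \in \{\beta_{1,t},\dots,\beta_{K,t}\}} \langle \beta, \theta^* \rangle$ for the arm that is optimal under the \emph{true} parameter, so that the per-round true regret is exactly $\langle \beta_t^* - X_t, \theta^* \rangle$.

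The single nontrivial observation is that $\beta_t^*$ is itself one of the $K$ available arms, hence it is a feasible (generally suboptimal) choice for the \emph{shifted} objective $\theta^* - \Gamma$:
\begin{align*}
    \langle \beta_t^*, \theta^* - \Gamma \rangle \;\le\; \max_{1 \le j \le K} \langle \beta_{j,t}, \theta^* - \Gamma \rangle .
\end{align*}
Everything else is an exact identity. Splitting $\theta^* = (\theta^* - \Gamma) + \Gamma$ rewrites the per-round true regret as
\begin{align*}
    \langle \beta_t^* - X_t, \theta^* \rangle = \langle \beta_t^*, \theta^* - \Gamma \rangle - \langle X_t, \theta^* \rangle + \langle \beta_t^*, \Gamma \rangle .
\end{align*}

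I would then apply the suboptimality inequality to the first term and re-expand the resulting $\max_j \langle \beta_{j,t}, \theta^* - \Gamma \rangle$ as $\max_j \langle \beta_{j,t} - X_t, \theta^* - \Gamma \rangle + \langle X_t, \theta^* - \Gamma \rangle$, so that the loose $X_t$ terms recombine through $\langle X_t, \theta^* - \Gamma \rangle - \langle X_t, \theta^* \rangle = -\langle X_t, \Gamma \rangle$. This produces the per-round bound
\begin{align*}
    \langle \beta_t^* - X_t, \theta^* \rangle \le \max_{1\le j \le K} \langle \beta_{j,t} - X_t, \theta^* - \Gamma \rangle + \langle \beta_t^* - X_t, \Gamma \rangle ,
\end{align*}
and summing over $t$ gives precisely the claimed decomposition, the correction being the accumulated shift $\Gamma$ paired against the per-round regret directions.

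There is essentially no analytic obstacle here: the entire content is the one-line fact that inserting the $\theta^*$-optimal arm into the shifted maximization can only decrease its value, and the rest is linear bookkeeping in $\Gamma$ where the only care needed is keeping the signs of the $X_t$ cross-terms straight. What deserves emphasis is what this proposition does \emph{not} deliver: for a fixed realization the correction $\sum_t \langle \beta_t^* - X_t, \Gamma \rangle$ can have either sign, so the substantive step—showing that in expectation, or with high probability, this correction is nonpositive, i.e.\ that shifting the rewards by a fixed vector cannot reduce regret—is deferred to the subsequent anti-concentration analysis of OFUL (Lemma~\ref{lem:shift-oful}). The present statement is purely the deterministic decomposition that cleanly isolates the term on which that probabilistic argument then acts.
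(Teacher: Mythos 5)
Your decomposition is, in substance, the argument the paper intends: isolate $\beta_t^* = \argmax_j \langle \beta_{j,t},\theta^*\rangle$, split $\theta^* = (\theta^*-\Gamma)+\Gamma$, and use that $\beta_t^*$ is a feasible point for the shifted maximization. Every intermediate identity and inequality in your write-up checks out, and your version is considerably cleaner than the paper's (which misquotes the $\max(f+g)$ lemma and flips a sign mid-display).

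The problem is your last step. What you actually derive, and what you correctly restate in your closing paragraph, is
\begin{align*}
\mathcal{R}_T(X_{1:T}) \;\leq\; \mathcal{R}_T^{(\Gamma)}(X_{1:T}) + \sum_{t=1}^T \langle \beta_t^* - X_t,\, \Gamma\rangle,
\end{align*}
whereas the proposition's correction term is $\sum_{t=1}^T\langle X_t - \beta_t^*,\Gamma\rangle$ \textemdash{} the negative of yours. These do not "sum to precisely the claimed decomposition"; they differ by a sign, and the sign matters. With the paper's printed definition $\mathcal{R}_T^{(\Gamma)} = \sum_t \max_j \langle \beta_{j,t}-X_t, \theta^*-\Gamma\rangle$, your inequality is the true one and the printed statement is false: take $d=1$, $K=2$, $T=1$, $\theta^*=1$, $\Gamma=2$, $\beta_{1,1}=1$, $\beta_{2,1}=-1$, $X_1=\beta_{2,1}$; then $\mathcal{R}_1=2$, $\mathcal{R}_1^{(\Gamma)}=0$, and the printed correction equals $-4$, so the claimed bound reads $2\leq -4$. (The printed sign would be right only if $\mathcal{R}_T^{(\Gamma)}$ were defined with $\theta^*+\Gamma$, which is exactly the convention the paper's own proof silently switches to in its first display.) So you have proved a corrected version of the statement, not the statement itself; you should flag the discrepancy rather than assert a match. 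Note also that the sign propagates forward: under the coincident-argmax hypothesis used downstream, your correction term equals $\max_j\langle\beta_{j,t},\Gamma\rangle - \langle X_t,\Gamma\rangle \geq 0$ rather than $\leq 0$, so the route from this proposition to "shifting cannot reduce regret" requires more care than either your closing remark or the paper's subsequent corollary suggests.
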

\begin{proof}
From the definition of $\mathcal{R}_{T}^{(\Gamma)}$, we can write the regret as 
\begin{align}
    \mathcal{R}_{T}^{(\Gamma)}(X_{1:T}) &=  \sum_{t=1}^T  \max_{1 \leq j \leq K} \langle \beta_{j,t} - X_t, \theta^* + \Gamma \rangle, \nonumber \\
    & \stackrel{(a)}{\leq} \sum_{t=1}^T \max_{1 \leq j \leq K} \langle \beta_{j,t} , \theta^* \rangle +  \langle \beta^*_t, \Gamma \rangle - \langle X_t, \theta^* \rangle - \langle X_t, \Gamma \rangle, \label{eqn:regret_shift_decomp}
\end{align}
where, $\beta^*_t := \argmax_{\beta \in \{\beta_{1,t}, \cdots, \beta_{K,t} \}} \langle \beta, \theta^* \rangle$. The inequality $(a)$ follows from the following elementary fact.
\begin{lemma}
Let $\mathcal{X}$ be a compact set, and functions $f,g : \mathcal{X} \rightarrow \mathbb{R}$, such that $\sup_{x \in \mathcal{X}} |f(x)| < \infty$ and $\sup_{x \in \mathcal{X}}|g(x)| < \infty$. Then, 
$$\max_{x \in \mathcal{X}}(f(x) + g(x)) \geq \max_{x \in \mathcal{X}} f(x) + \min_{x \in \mathcal{X}} g(x).$$
\end{lemma}
that Rewriting Equation (\ref{eqn:regret_shift_decomp}), we see that 
\begin{align*}
    \mathcal{R}_{T}^{(\Gamma)}(X_{1:T}) \leq \mathcal{R}_T + \sum_{t=1}^T \langle \beta^*_t - X_t, \Gamma \rangle,
\end{align*}
and thus the proposition is proved.
\end{proof}

\begin{corollary}
\label{cor:zero-mean}
If for every time $t \geq 1$, the set of $K$ context vectors $\beta_{1,t}, \cdots, \beta_{K,t}$ are all $0$ mean random variables, then
\begin{align*}
    \mathbb{E}[\mathcal{R}_T] \leq \mathbb{E}[ \mathcal{R}_{T}^{(\Gamma)}].
\end{align*}
\end{corollary}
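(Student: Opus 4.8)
The plan is to pass to expectations in Proposition~\ref{prop:shift_regret} and show that the residual term it produces is nonpositive on average. Applying the proposition to the (random) action sequence $X_{1:T}$ actually played and taking expectations gives $\mathbb{E}[\mathcal{R}_T] \leq \mathbb{E}[\mathcal{R}_T^{(\Gamma)}] + \sum_{t=1}^T \mathbb{E}[\langle X_t - \beta^*_t, \Gamma\rangle]$, where $\beta^*_t = \argmax_{\beta \in \{\beta_{1,t},\dots,\beta_{K,t}\}} \langle \beta,\theta^*\rangle$. Hence it suffices to establish the per-round inequality $\mathbb{E}[\langle X_t - \beta^*_t, \Gamma\rangle] \leq 0$ for every $t$; summing over $t$ by linearity then yields the claim.

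To handle a single term I would condition on the history $\mathcal{F}_{t-1}$, under which the shift $\Gamma$ and the internal state of the algorithm are fixed, while the fresh contexts $\beta_{1,t},\dots,\beta_{K,t}$ are independent of the past and, by hypothesis, have zero conditional mean. I would then split the residual along the two relevant directions by writing $\langle X_t-\beta^*_t,\Gamma\rangle = \langle X_t-\beta^*_t,\theta^*\rangle - \langle X_t-\beta^*_t,\theta^*-\Gamma\rangle$. The first term is deterministically nonpositive: $\beta^*_t$ is by definition the context maximizing $\langle\cdot,\theta^*\rangle$ among the available arms and $X_t$ is one of those contexts, so $\langle X_t,\theta^*\rangle \leq \langle\beta^*_t,\theta^*\rangle$. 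It therefore remains to show $\mathbb{E}_{t-1}[\langle X_t-\beta^*_t,\theta^*-\Gamma\rangle]\geq 0$.

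This is where the shifted objective and the zero-mean assumption enter. The action $X_t$ is the arm preferred by the $\Gamma$-shifted algorithm, whose effective parameter is $\theta^*-\Gamma$, so up to the optimistic exploration bonus $X_t$ maximizes $\langle\cdot,\theta^*-\Gamma\rangle$. The greedy part of this selection contributes a nonnegative amount pointwise, mirroring the sign argument that made the first term nonpositive, since a maximizer $X_t$ of $\langle\cdot,\theta^*-\Gamma\rangle$ satisfies $\langle X_t,\theta^*-\Gamma\rangle \geq \langle\beta^*_t,\theta^*-\Gamma\rangle$. The remaining step is to argue that the discrepancy between the optimistic arm and this greedy arm has nonnegative conditional expectation, using that the contexts are drawn independently of the confidence set and are centered at the origin.

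The main obstacle is precisely this last point: OFUL plays optimistically rather than greedily on the shifted parameter, so the clean deterministic sign argument that disposes of the $\theta^*$-direction does not transfer verbatim to the $(\theta^*-\Gamma)$-direction. Making the residual nonpositive genuinely requires the distributional (zero-mean) hypothesis on the contexts to absorb the effect of the optimism, and the bulk of the work lies in verifying that the bonus-induced gap between $X_t$ and the shifted-greedy arm averages out favorably. Once this per-round inequality is secured, the corollary follows immediately by summation over $t$.
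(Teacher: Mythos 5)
Your starting point is the same as the paper's: the corollary is stated there without any further argument, as an immediate consequence of Proposition~\ref{prop:shift_regret} once one takes expectations, with the zero-mean hypothesis on the contexts meant to control the correction term $\sum_{t=1}^T \mathbb{E}\left[\langle X_t - \beta^*_t, \Gamma\rangle\right]$, where $\beta^*_t = \argmax_{\beta\in\{\beta_{1,t},\dots,\beta_{K,t}\}}\langle\beta,\theta^*\rangle$. So reducing to the per-round claim $\mathbb{E}[\langle X_t-\beta^*_t,\Gamma\rangle]\le 0$ is the right move.

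However, your proposal does not close the argument, and you say so yourself. After the split $\langle X_t-\beta^*_t,\Gamma\rangle = \langle X_t-\beta^*_t,\theta^*\rangle - \langle X_t-\beta^*_t,\theta^*-\Gamma\rangle$, the first piece is trivially nonpositive, so the entire content of the corollary has been relocated into the claim $\mathbb{E}_{t-1}[\langle X_t-\beta^*_t,\theta^*-\Gamma\rangle]\ge 0$, which you do not establish. This is not a routine verification: $X_t$ is OFUL's \emph{optimistic} choice, not the greedy maximizer of $\langle\cdot,\theta^*-\Gamma\rangle$, so the pointwise sign argument that disposed of the first piece genuinely fails here, and nothing in your write-up converts the zero-mean hypothesis into an inequality \textemdash{} indeed that hypothesis, which is the only assumption in the corollary, is never used in any computation. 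Be aware also that zero-mean contexts do not make $\mathbb{E}[\langle X_t,\Gamma\rangle]$ or $\mathbb{E}[\langle\beta^*_t,\Gamma\rangle]$ vanish on their own: both $X_t$ and $\beta^*_t$ are argmax selections over the realized contexts and are therefore correlated with them, so any correct proof must exploit the distributional assumption more carefully than either your sketch or the paper's one-line assertion does. As it stands, the proposal proves the easy half of a decomposition and defers the hard half; it is not a proof of the corollary.
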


\begin{corollary}
Suppose for all time $t$, $\mathrm{\argmax}_{\beta \in \{ \beta_{1,t}, \cdots, \beta_{K,t} \}} \langle \beta , \theta^* \rangle = \argmax_{\beta \in \{ \beta_{1,t}, \cdots, \beta_{K,t} \}} \langle \beta , \Gamma \rangle$. Then, 
\begin{align*}
    \mathcal{R}_T(X_{1:T}) \leq \mathcal{R}_{T}^{(\Gamma)}(X_{1:T}).
\end{align*}
\end{corollary}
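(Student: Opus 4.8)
The plan is to read the corollary directly off Proposition~\ref{prop:shift_regret} after checking that the residual term it produces is nonpositive under the stated hypothesis. Writing $\beta^*_t := \argmax_{\beta \in \{\beta_{1,t},\dots,\beta_{K,t}\}} \langle \beta, \theta^* \rangle$ as in the proof of that proposition, we already have
\begin{align*}
    \mathcal{R}_T(X_{1:T}) \leq \mathcal{R}_{T}^{(\Gamma)}(X_{1:T}) + \sum_{t=1}^T \langle X_t - \beta^*_t, \Gamma \rangle,
\end{align*}
so it suffices to prove that each summand $\langle X_t - \beta^*_t, \Gamma \rangle$ is at most $0$.

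First I would invoke the hypothesis: since by assumption $\beta^*_t = \argmax_{\beta} \langle \beta, \theta^* \rangle = \argmax_{\beta} \langle \beta, \Gamma \rangle$, the vector $\beta^*_t$ is simultaneously the arm maximizing the inner product with the shift $\Gamma$, so that $\langle \beta^*_t, \Gamma \rangle = \max_{1 \leq j \leq K} \langle \beta_{j,t}, \Gamma \rangle$. Next I would use the fact that the played action $X_t$ is itself one of the $K$ available context vectors $\{\beta_{1,t},\dots,\beta_{K,t}\}$, which forces $\langle X_t, \Gamma \rangle \leq \max_{1 \leq j \leq K} \langle \beta_{j,t}, \Gamma \rangle = \langle \beta^*_t, \Gamma \rangle$. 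Combining the two gives $\langle X_t - \beta^*_t, \Gamma \rangle \leq 0$ for every $t$; summing over $t$ shows the residual is nonpositive, and the claimed inequality $\mathcal{R}_T(X_{1:T}) \leq \mathcal{R}_T^{(\Gamma)}(X_{1:T})$ follows.

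I do not expect a substantive obstacle: the argument is entirely deterministic and pathwise, holding for the realized action sequence with no expectation or high-probability bookkeeping. The only point requiring care is the membership $X_t \in \{\beta_{1,t},\dots,\beta_{K,t}\}$, which is precisely what allows the common maximizer $\beta^*_t$ to dominate $X_t$ in the $\Gamma$-direction. This is the step where the assumption that the $\theta^*$-optimal and $\Gamma$-optimal arms coincide is essential, and it is what distinguishes this corollary from Corollary~\ref{cor:zero-mean}, where the residual term was neutralized only in expectation through the zero-mean assumption on the contexts rather than termwise.
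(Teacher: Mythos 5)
Your argument is exactly the paper's: both invoke Proposition~\ref{prop:shift_regret} and then use the hypothesis to identify $\beta^*_t$ with the $\Gamma$-maximizer, so that $\langle X_t - \beta^*_t, \Gamma\rangle \leq 0$ termwise because $X_t$ lies in $\{\beta_{1,t},\dots,\beta_{K,t}\}$. Your write-up just makes explicit the membership step that the paper leaves implicit; the proof is correct and essentially identical.
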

\begin{proof}
From the hypothesis of the theorem, we can observe the following, 
\begin{align*}
    \sum_{t=1}^T \left( \langle X_t -  \argmax_{\beta \in \{ \beta_{1,t}, \cdots, \beta_{K,t} \}} \langle \beta , \theta^* \rangle, \Gamma \rangle \right) &= \sum_{t=1}^T \left( \langle X_t -  \argmax_{\beta \in \{ \beta_{1,t}, \cdots, \beta_{K,t} \}} \langle \beta , \Gamma \rangle, \Gamma \rangle \right), \\
    &\leq 0.
\end{align*}
Plugging the above bound into Proposition \ref{prop:shift_regret} completes the proof. 
\end{proof}

\subsubsection{ High Probability Bound on $\mathcal{R}_T^{(\Gamma)}$ }

\begin{lemma}
Suppose the $K$ context vectors $\beta_1, \cdots, \beta_K$ are such that for all $i$, $||\beta_i|| \leq 2$ and for all $i \neq j$, $ | \langle \beta_i - \beta_j , \theta^* \rangle |  \geq 4  || \theta^* - \Gamma||$, where $\theta^*$ is the unknown linear bandit parameter and $\Gamma$ is a fixed vector. Then
\begin{align*}
    \argmax_{1\leq j \leq K} \langle \beta_j, {\theta}^*_i \rangle = \argmax_{1\leq j \leq K} \langle \beta_j, \Gamma \rangle.
\end{align*}
\label{lem:when_are_optimal_equal}
\end{lemma}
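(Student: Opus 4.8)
The plan is to exhibit the common maximizer directly. Write $\delta := \theta^* - \Gamma$; if $\delta = 0$ then $\Gamma = \theta^*$ and the two argmaxes are trivially identical, so assume $\delta \neq 0$. Because the hypothesis forces $|\langle \beta_i - \beta_j, \theta^* \rangle| \geq 4\|\delta\| > 0$ for every pair $i \neq j$, all the scores $\langle \beta_j, \theta^* \rangle$ are distinct, so $a := \argmax_{1 \leq j \leq K} \langle \beta_j, \theta^* \rangle$ is a well-defined singleton. I will show that this same index $a$ maximizes $\langle \cdot, \Gamma \rangle$, whence the two argmaxes coincide.

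The key decomposition is $\langle \beta_j, \Gamma \rangle = \langle \beta_j, \theta^* \rangle - \langle \beta_j, \delta \rangle$, together with the per-arm perturbation bound $|\langle \beta_j, \delta \rangle| \leq \|\beta_j\| \|\delta\| \leq 2\|\delta\|$, which uses Cauchy--Schwarz and the norm assumption $\|\beta_j\| \leq 2$. Fix any $k \neq a$. Since $a$ is the unique $\theta^*$-maximizer we have $\langle \beta_a - \beta_k, \theta^* \rangle \geq 0$, and combining this with the separation hypothesis (whose absolute value is at least $4\|\delta\|$) upgrades it to $\langle \beta_a - \beta_k, \theta^* \rangle \geq 4\|\delta\|$. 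Applying the perturbation bound to both $\beta_a$ and $\beta_k$ then gives $\langle \beta_a - \beta_k, \Gamma \rangle = \langle \beta_a - \beta_k, \theta^* \rangle - \langle \beta_a - \beta_k, \delta \rangle \geq 4\|\delta\| - 2\|\delta\| - 2\|\delta\| = 0$, so $a$ is a maximizer of $\langle \cdot, \Gamma \rangle$ as well. Ranging over all $k \neq a$ shows $a \in \argmax_j \langle \beta_j, \Gamma \rangle$.

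The one place that needs care is that this bookkeeping is exactly balanced: the separation $4\|\delta\|$ precisely cancels the worst-case total perturbation $4\|\delta\|$, so the displayed inequality is only weak. This is the main obstacle, since to conclude equality of the argmax \emph{sets} I must rule out a tie for $\Gamma$. Equality throughout the chain would require $\|\beta_a - \beta_k\| = 4$ with $\beta_a - \beta_k$ collinear with $\delta$ and the separation hypothesis met with equality, all simultaneously---a degenerate, measure-zero configuration that does not occur for the continuously distributed contexts used in the paper. Alternatively, reading the separation as a strict inequality (or replacing the constant $4$ by any constant strictly larger than $4$) upgrades the conclusion to $\langle \beta_a - \beta_k, \Gamma \rangle > 0$, yielding a clean singleton equality. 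I would adopt this reading so that Lemma~\ref{lem:when_are_optimal_equal} feeds cleanly into the preceding corollary establishing $\mathcal{R}_T(X_{1:T}) \leq \mathcal{R}_{T}^{(\Gamma)}(X_{1:T})$ whenever the $\theta^*$- and $\Gamma$-optimal arms agree.
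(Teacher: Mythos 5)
Your proof is correct and follows essentially the same route as the paper's: decompose $\langle \beta_i - \beta_j, \Gamma\rangle$ as $\langle \beta_i - \beta_j, \theta^*\rangle$ plus a perturbation term, bound the perturbation by Cauchy--Schwarz and the norm hypothesis, and conclude the $\theta^*$-maximizer also maximizes $\langle\cdot,\Gamma\rangle$. Your observation that the constants are exactly balanced (so only a weak inequality and hence a possible tie under $\Gamma$) is a fair point that the paper's own proof glosses over \textemdash{} indeed the paper even asserts $\|\beta_i-\beta_j\|\leq 2$ where only $\leq 4$ follows from the hypothesis \textemdash{} and your proposed resolution (strictness, or the measure-zero argument for the continuous contexts actually used downstream) is the right fix.
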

\begin{proof}
We will prove the following more stronger statement. Let $i \neq j \in [K]$ be such that $\langle \theta^*, \beta_i \rangle \geq \langle \theta^*, \beta_j \rangle$. Then, under the hypothesis of the proposition statement, we have $\langle \theta^*, \beta_i - \beta_j \rangle \geq 4 || \theta^* - \Gamma ||$. Thus, the following chain holds, 
\begin{align*}
    \langle \beta_i - \beta_j, \Gamma \rangle &= \langle  \beta_i - \beta_j, \theta^*\rangle + \langle  \beta_i - \beta_j, \Gamma - \theta^* \rangle, \\
    &\geq 4 || \theta^* - \Gamma || + \langle  \beta_i - \beta_j, \Gamma - \theta^* \rangle, \\
    &\geq 4  || \theta^* - \Gamma || - ||\beta_i-\beta_j||||\Gamma - \theta^*||, \\
    &\geq 0.
\end{align*}
The first inequality follows from the hypothesis of the proposition statement, the second follows from Cauchy Schwartz inequality and the last follows from the fact that $||\beta_i - \beta_j|| \leq 2$. Thus, we have shown that under the hypothesis of the Proposition, the ordering of the coordinates whether by inner product with $\theta^*$ or with $\Gamma$ remains unchanged. In particular, the argmax is identical.
\end{proof}

\begin{lemma}
Let $\theta^*$ be a fixed vector with $\|\theta^*\| \leq 1$, and $\Gamma \in \mathbb{R}^d$ be any arbitrary vector such that $|| \theta^* - \Gamma|| \leq \psi$, for some constant $\psi$. Let $\beta_1, \cdots, \beta_K$ be i.i.d. vectors, each distributed as $\mathsf{Unif}[-c/\sqrt{d},c/\sqrt{d}]^{\otimes d}$ for a constant $c$. Then, 
\begin{align*}
    \mathbb{P} \left[     \mathrm{\argmax}_{1\leq j \leq K} \langle \beta_j, {\theta}^*_i \rangle = \argmax_{1\leq j \leq K} \langle \beta_j, \Gamma \rangle \right] \geq \left( 1- {K \choose 2} e^{-\frac{d}{4}(1-8\psi^2)^2} - K e^{-\frac{\sqrt{5}-1}{2}d} \right).
    \end{align*}
\label{lem:anti_concentration}
\end{lemma}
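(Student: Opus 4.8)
The plan is to deduce the claim from Lemma \ref{lem:when_are_optimal_equal}, which already supplies a purely deterministic sufficient condition for the two argmaxes to coincide: namely, that (i) $\|\beta_j\| \le 2$ for every $j \in [K]$, and (ii) $|\langle \beta_i - \beta_j, \theta^*\rangle| \ge 4\|\theta^* - \Gamma\|$ for every pair $i \ne j$. Since $\|\theta^* - \Gamma\| \le \psi$, condition (ii) is implied by the stronger requirement $|\langle \beta_i - \beta_j, \theta^*\rangle| \ge 4\psi$. Writing $\mathcal{A}$ for the event that (i) fails and $\mathcal{B}$ for the event that this strengthened version of (ii) fails, Lemma \ref{lem:when_are_optimal_equal} guarantees that the argmaxes agree on $\mathcal{A}^c \cap \mathcal{B}^c$, so by the union bound the target probability is at least $1 - \mathbb{P}(\mathcal{A}) - \mathbb{P}(\mathcal{B})$. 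It therefore suffices to control $\mathbb{P}(\mathcal{A})$ and $\mathbb{P}(\mathcal{B})$ separately and match them to the two stated error terms.

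For $\mathcal{A}$, I would bound $\mathbb{P}[\|\beta_j\| > 2]$ for a fixed arm and then union bound over the $K$ arms, which accounts for the prefactor $K$. Here $\|\beta_j\|^2 = \sum_{k=1}^d \beta_{j,k}^2$ is a sum of $d$ independent, bounded, nonnegative random variables (each lying in $[0, c^2/d]$ with mean $c^2/(3d)$), so a Chernoff/MGF estimate for bounded summands delivers a tail that is exponential in $d$; optimizing the exponent reproduces the constant $\tfrac{\sqrt{5}-1}{2}$, giving $\mathbb{P}(\mathcal{A}) \le K\, e^{-\frac{\sqrt{5}-1}{2} d}$. This is routine concentration and I do not anticipate difficulty here.

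The main obstacle is the anti-concentration estimate for $\mathcal{B}$: I must show that the pairwise gap $\langle \beta_i - \beta_j, \theta^*\rangle$ stays bounded away from $0$ with probability exponentially close to $1$. The plan is to fix a pair, reduce the small-ball event $\{|\langle \beta_i - \beta_j, \theta^*\rangle| < 4\psi\}$ to a rescaled lower-tail statement about the isotropic increment $\beta_i - \beta_j$, and then invoke a chi-squared-type anti-concentration bound: the probability that such a quadratic functional falls below an $\epsilon$-fraction of its mean decays like $e^{-\frac{d}{4}(1-\epsilon)^2}$, and identifying $\epsilon$ with $8\psi^2$ produces the per-pair bound $e^{-\frac{d}{4}(1-8\psi^2)^2}$. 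Union bounding over the $\binom{K}{2}$ pairs then yields $\mathbb{P}(\mathcal{B}) \le \binom{K}{2} e^{-\frac{d}{4}(1-8\psi^2)^2}$, and substituting both tail bounds into $1 - \mathbb{P}(\mathcal{A}) - \mathbb{P}(\mathcal{B})$ completes the argument. I expect the delicate point to be exactly this chi-squared lower-tail step — verifying that the small-ball probability of the gap admits a genuinely exponential (rather than merely polynomial) bound in $d$, and that the scaling of the increment is arranged so that the exponent matches $\tfrac{d}{4}(1-8\psi^2)^2$.
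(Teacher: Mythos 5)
Your overall architecture matches the paper's proof exactly: invoke Lemma~\ref{lem:when_are_optimal_equal} for the deterministic sufficient condition, union bound over the $K$ arms and the $\binom{K}{2}$ pairs, and control the two failure events separately. The norm-tail estimate for $\mathbb{P}[\|\beta_j\|\ge 2]$ via a Chernoff bound on a sum of bounded coordinates is routine and unobjectionable (indeed, since $\|\beta_j\|^2\le c^2$ almost surely, the event is empty whenever $c\le 2$).

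The genuine gap is in the step you yourself flag as delicate, and it is not merely delicate but unsound as planned. To bound $\mathbb{P}\left[\,|\langle\theta^*,\beta_i-\beta_j\rangle| \le 4\psi\,\right]$ you propose to pass to a lower-tail statement about the isotropic increment $\beta_i-\beta_j$ and apply a chi-squared-type bound to $\|\beta_i-\beta_j\|^2$. But Cauchy--Schwarz only gives $|\langle\theta^*,\beta_i-\beta_j\rangle|\le\|\theta^*\|\,\|\beta_i-\beta_j\|$, hence $\{\|\beta_i-\beta_j\|\le 4\psi\}\subseteq\{|\langle\theta^*,\beta_i-\beta_j\rangle|\le 4\psi\}$: the containment runs in the direction that makes the resulting probability inequality useless, since a lower bound on the norm of the increment says nothing about its projection onto the fixed direction $\theta^*$. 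Worse, that projection genuinely fails to anti-concentrate at constant scale: each coordinate of $\beta_i-\beta_j$ has variance $O(c^2/d)$, so $\mathrm{Var}\left(\langle\theta^*,\beta_i-\beta_j\rangle\right)=O(c^2\|\theta^*\|^2/d)$, and for $\theta^*=e_1$ the gap is supported on $[-2c/\sqrt{d},\,2c/\sqrt{d}]$, so for large $d$ the bad event $|\langle\theta^*,\beta_i-\beta_j\rangle|\le 4\psi$ has probability one rather than $e^{-\Omega(d)}$. No choice of Chernoff exponent can rescue this; the claimed bound could only survive if $\psi$ shrinks with $d$ or $\theta^*$ is assumed delocalized. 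You should know that the paper's own proof makes exactly the move you sketch \textemdash it asserts $\mathbb{P}[|\langle\theta^*,\beta_1-\beta_2\rangle|\le 4\|\theta^*-\Gamma\|]\le\mathbb{P}[\|\beta_1-\beta_2\|\le 4\|\theta^*-\Gamma\|]$ ``by Cauchy--Schwarz'' \textemdash so you have faithfully reconstructed the authors' argument, including its reversed inequality; but as a derivation of the stated probability bound, the plan does not close.
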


\begin{proof}
Denote by the \emph{Good event} $\mathcal{E} := \left\{ \argmax_{1\leq j \leq K} \langle \beta_j, {\theta}^*_i \rangle = \argmax_{1\leq j \leq K} \langle \beta_j, \Gamma \rangle \right\}$
From Lemma \ref{lem:when_are_optimal_equal}, we know that a sufficient condition for event $\mathcal{E}$ to hold is that for all $i \neq j$, we have $\bigg|\langle \theta^*, \beta_i - \beta_j \rangle\bigg| \geq 2 || \theta^* - \Gamma ||$ and for all $i$, $||\beta_i|| < 2$. Thus, from a simple union bound, we get
\begin{align*}
    \mathbb{P}[\mathcal{E}^c] &\leq \sum_{1 \leq i < j \leq K} \mathbb{P} \left[\bigg|\langle \theta^*, \beta_i - \beta_j \rangle\bigg| \leq 4  || \theta^* - \Gamma ||\right] + \sum_{i=1}^K \mathbb{P}[||\beta_i|| \geq 2], \\
    & = {K \choose 2} \mathbb{P} \left[\bigg|\langle \theta^*, \beta_1 - \beta_2 \rangle\bigg| \leq 4 || \theta^* - \Gamma ||\right] + K \mathbb{P}[||\beta_1|| \geq 2].
\end{align*}
The second equality follows from the fact that $\beta_1, \cdots, \beta_K$ are i.i.d. Now, since $||\theta^*|| \leq 1$, we have from Cauchy Schwartz that, almost-surely, $\bigg|\langle \theta^*, \beta_1 - \beta_2 \rangle\bigg|  \leq || \beta_1 - \beta_2 ||$. Thus, 
\begin{align*}
    \mathbb{P} \left[|\langle \theta^*, \beta_1 - \beta_2 \rangle | \leq 4 || \theta^* - \Gamma || \right] &\leq \mathbb{P} [ || \beta_1 - \beta_2 ||  \leq 4  || \theta^* - \Gamma ||], \\
    & \leq  \mathbb{P} [ || \beta_1 - \beta_2 || \leq 4 \psi ], \\
    &= \mathbb{P} [ || \beta_1 - \beta_2 ||^2 \leq 16 \psi^2 ], \\
    &\stackrel{(a)}{\leq} e^{-\frac{c_1 d}{4}},
\end{align*}
where the constant $c_1$ depends on $\psi$. The first inequality follows from Cauchy Schwartz, and the fact that $|| \theta^* || \leq 1$. The last inequality follows from the fact that, $\mathbb{E}\| \beta_1 - \beta_2\|^2 = c_2$ for a constant $c_2$, and since $\{\beta_1,\beta_2\}$ are coordinate-wise bounded, we use standard sub-Gaussian concentration to argue that $\|\beta_1 - \beta_2\|^2$ is close to its expectation. Finally, we obtain that 
\begin{align*}
    \mathbb{P}\left( \|\beta_1 - \beta_2\|^2 - \mathbb{E}\|\beta_1 - \beta_2\|^2 \leq -t \right) \leq \exp \left ( - c_3 \,d t^2  \right).
\end{align*}
Choosing $t$ as a constant, we obtain (a).

Finally, we also need to ensure that the context vectors $\beta_1, \cdots, \beta_K$ have norms bounded by $2$. This can also be similarly be bounded by the upper tail inequality as
    \begin{align*}
        \mathbb{P} [ || \beta_1|| \geq 2 ] &= \mathbb{P}[ d || \beta_1||^2 \geq 4 d], \\
        &\stackrel{(b)}{\leq} e^{- c_4 d}. 
    \end{align*}
for a constant $c_4$, where inequality $(b)$ follows from the upper-tail concentration bound for sub-Gaussian random variables. Putting this all together concludes the proof.
\end{proof}

\begin{lemma}
\label{lem:shift-oful}
Consider a linear bandit instance with parameter $\theta^*$ with $||\theta^*|| \leq  1$ and the context vectors at each time are sampled uniformly and independently from the scaled normal distribution in $d$ dimensions, i.e., the contexts are i.i.d. across time and arms from $\mathsf{Unif} [-1/\sqrt{d},1/\sqrt{d}]^{\otimes d}$. Let $\Gamma \in \mathbb{R}^d$ be such that $|| \theta^* - \Gamma || \leq \psi $ with $\psi < \frac{1}{2\sqrt{2}}$, and $X_{1:T} = (X_1, \cdots, X_T)$ be the set of actions chosen by the $\Gamma$ shifted OFUL. Then, with probability at-least $\left( 1- {K \choose 2} e^{-\frac{d}{4}(1-8\psi^2)^2} - Ke^{-\frac{\sqrt{5}-1}{2}d} \right)$,
\begin{align*}
    \mathcal{R}_T(X_{1:T}) \leq \mathcal{R}^{(\Gamma)}_T(X_{1:T}).
\end{align*}
\label{Lem:shift_equal_true_regret_whp}
\end{lemma}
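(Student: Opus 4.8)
The plan is to reduce the desired regret comparison to a single combinatorial event about the contexts: that at every round the arm optimal under the true parameter $\theta^*$ coincides with the arm optimal under the shift $\Gamma$. Concretely, I would introduce the good event
\[
  \mathcal{E} := \bigcap_{t=1}^T \left\{ \argmax_{1 \leq j \leq K} \langle \beta_{j,t}, \theta^* \rangle = \argmax_{1 \leq j \leq K} \langle \beta_{j,t}, \Gamma \rangle \right\}.
\]
On $\mathcal{E}$ the per-round correction term appearing in Proposition~\ref{prop:shift_regret} vanishes (indeed is non-positive) round by round, and the corollary following Proposition~\ref{prop:shift_regret} then yields $\mathcal{R}_T(X_{1:T}) \leq \mathcal{R}_T^{(\Gamma)}(X_{1:T})$ directly. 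Thus the whole lemma reduces to lower bounding $\mathbb{P}[\mathcal{E}]$.

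The first thing I would emphasize is that this reduction is \emph{pathwise}: Proposition~\ref{prop:shift_regret} and its corollary hold for an arbitrary, possibly random and adaptively chosen, action sequence $X_{1:T}$, so the fact that $X_{1:T}$ is produced by the $\Gamma$-shifted OFUL is irrelevant to the argument. The only randomness that matters is that of the contexts $\{\beta_{j,t}\}$, and the event $\mathcal{E}$ is measurable with respect to the contexts alone. This decoupling is the conceptual crux, since it turns $\mathbb{P}[\mathcal{E}]$ into a static anti-concentration question that is independent of the learner's behaviour.

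Next I would bound $\mathbb{P}[\mathcal{E}^c]$. By Lemma~\ref{lem:when_are_optimal_equal}, at a fixed round the two argmaxes agree whenever the arm norms satisfy $\|\beta_{j,t}\| \leq 2$ and every pairwise gap satisfies $|\langle \beta_{i,t} - \beta_{j,t}, \theta^*\rangle| \geq 4\|\theta^* - \Gamma\|$. Lemma~\ref{lem:anti_concentration}, applied with $c=1$ and $\psi = \|\theta^*-\Gamma\|$, bounds the per-round failure probability by $\binom{K}{2} e^{-\frac{d}{4}(1-8\psi^2)^2} + K e^{-\frac{\sqrt{5}-1}{2} d}$, combining anti-concentration of the projected difference $\langle \theta^*, \beta_1 - \beta_2\rangle$ with sub-Gaussian norm concentration. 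The hypothesis $\psi < \frac{1}{2\sqrt{2}}$ is exactly what forces $1 - 8\psi^2 > 0$, so the first exponent genuinely decays in $d$; this is the source of the stated single-round probability.

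The one point requiring care, and the main (if modest) obstacle, is that $\mathcal{E}$ is an intersection over all $T$ rounds while Lemma~\ref{lem:anti_concentration} is a per-round statement, so passing to $\mathcal{E}$ needs a union bound that inflates the failure probability by a factor $T$. This extra $T$ is absorbed by the dimension assumption of Theorem~\ref{thm:personal}: when $d \geq C\log(K^2 T)$, the quantity $T\big(\binom{K}{2} e^{-\frac{d}{4}(1-8\psi^2)^2} + K e^{-\frac{\sqrt{5}-1}{2} d}\big)$ is at most $1/\poly(T)$, which is precisely the $1/\poly(T)$ slack carried through Theorem~\ref{thm:personal}. Assembling the pathwise inequality valid on $\mathcal{E}$ together with this probability bound completes the proof.
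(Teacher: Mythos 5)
Your proposal is correct and follows essentially the same route as the paper, whose entire proof of this lemma is the one-line remark that it ``follows by combining Lemma~\ref{lem:anti_concentration} and~\ref{lem:when_are_optimal_equal}'' (implicitly through the corollary to Proposition~\ref{prop:shift_regret}). The one place you are more careful than the paper is the union bound over the $T$ rounds: the probability displayed in the lemma statement is the single-round bound from Lemma~\ref{lem:anti_concentration}, and your observation that the intersection event over all $t \leq T$ costs an extra factor of $T$ in the failure probability \textemdash\ absorbed later by the assumption $d \geq C\log(K^2T)$ into the $1/\poly(T)$ slack of Theorem~\ref{thm:personal} \textemdash\ is a genuine gap in the paper's stated bound that your write-up correctly repairs.
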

\begin{proof}
This follows by combining Lemma \ref{lem:anti_concentration} and \ref{lem:when_are_optimal_equal}.
\end{proof}
\vfill

\end{document}